\newcommand{\perch}{\textsc{Perch}\xspace}
\newcommand{\alg}{\textsc{Grinch}\xspace}
\newcommand{\algrotate}{\textsc{Rotate}\xspace}
\newcommand{\records}{data points\xspace}
\newcommand{\record}{data point\xspace}
\newcommand{\cstar}{\ensuremath{\mathcal{C}^\star}}
\newcommand{\hac}{\textsc{HAC}\xspace}
\newcommand{\greedy}{\textsc{Online}\xspace}
\newcommand{\exact}{\textsc{HAC}\xspace}
\newcommand{\mbhac}{\textsc{MB-HAC}\xspace}
\newcommand{\graft}{\texttt{graft}\xspace}
\newcommand{\grafting}{\texttt{graft}ing\xspace}
\newcommand{\grafts}{\texttt{graft}s\xspace}
\newcommand{\rotate}{\texttt{rotate}\xspace}
\newcommand{\rst}{\texttt{restruct}\xspace}
\newcommand{\hof}{linkage function\xspace}
\newcommand{\hofs}{linkage functions\xspace}
\newcommand{\lvs}[1]{\ensuremath{\texttt{lvs}(#1)}}
\newcommand{\ancs}[1]{\ensuremath{\texttt{ancs}(#1)}}
\newcommand{\conNN}{\ensuremath{\texttt{constrNN}}\xspace}
\newcommand{\lca}[2]{\ensuremath{\texttt{lca}(#1, #2)}}
\newcommand{\merge}[2]{\ensuremath{\texttt{merge}(#1, #2)}}
\newcommand{\pur}[1]{\ensuremath{\texttt{pur}}(#1)\xspace}
\newcommand{\sib}[1]{\ensuremath{\texttt{s}}(#1)\xspace}
\newcommand{\parent}[1]{\ensuremath{\texttt{par}}(#1)\xspace}
\newcommand{\aunt}[1]{\ensuremath{\texttt{aunt}}(#1)\xspace}
\newcommand{\condition}[2]{\ensuremath{f(#1, #2) > \max[f(#1, \sib{#1}), f(#2, \sib{#2})]}}
\def\BibTeX{{\rm B\kern-.05em{\sc i\kern-.025em b}\kern-.08emT\kern-.1667em\lower.7ex\hbox{E}\kern-.125emX}}
\begin{document}

\title{Scalable Hierarchical Clustering with Tree Grafting}
\author{Nicholas Monath\thanks{The first two authors contributed
    equally.} $^1$, Ari Kobren$^{*1}$, Akshay Krishnamurthy$^2$, \\Michael
  Glass$^3$, Andrew McCallum$^1$}
\affil{\{nmonath,akobren,akshay,mccallum\}@cs.umass.edu, mrglass@us.ibm.com}
\affil{College of Information and Computer Sciences \\ University of
  Massachusetts Amherst$^1$}
\affil{Microsoft Research, New York City$^2$}
\affil{IBM, New York City$^3$}

\maketitle

\begin{abstract}
  We introduce \alg, a new algorithm for large-scale, non-greedy
  hierarchical clustering with general linkage functions that compute arbitrary similarity between two point
  sets.  The key components of \alg are its \rotate and \graft subroutines
  that efficiently reconfigure the hierarchy as new points arrive,
  supporting discovery of clusters with complex structure.  \alg
  is motivated by a new notion of separability for clustering with
  \hofs: we prove that when the model is consistent with a
  ground-truth clustering, \alg is guaranteed to produce a cluster
  tree containing the ground-truth, independent of data arrival order.
  Our empirical results on benchmark and author coreference datasets
  (with standard and learned \hofs) show that \alg is more accurate
  than other scalable methods, and orders of magnitude faster than
  hierarchical agglomerative clustering.
\end{abstract}
\section{Introduction}
\label{sec:intro}
Best-first, bottom-up, hierarchical agglomerative clustering (\hac) is
one of the most widely-used clustering algorithms, proving effective
for a wide variety of applications such as analyzing gene expression
data~\cite{eisen1998cluster}, community detection in social
networks~\cite{blundell2013bayesian}, and scientific author
disambiguation~\cite{culotta2007author}. One capability that
contributes significantly to \hac's prevalence is that it can be used
to construct a clustering according to any cluster-level scoring
function, also known as a \emph{\hof}~\cite{kohli2009robust,
  lee2012joint}. This is crucial for applications such as entity
resolution, in which the quality of a cluster is typically a learned
function of a group of \records~\cite{culotta2007author,
  singh2011large, wick2012discriminative}.

While effective, \hac requires $O(n^2\log n)$ computation for general
\hofs, making it infeasible to run on datasets of even moderate
size. One option for circumventing this computational problem is to
use an online or mini-batch variant of the algorithm.  However, both
\hac variants make irrecoverable, greedy merges and are thus sensitive
to data arrival order.  Non-greedy, \emph{incremental algorithms}
provide a more robust alternative to their online
counterparts~\cite{kobren2017hierarchical, zhang1996birch}. Like
online approaches, incremental methods consume \records, one at a time,
but when new data arrives, incremental algorithms can also revisit
previous clustering decisions. However, current incremental algorithms
fail in two ways: they are only capable of reconsidering clustering
decisions at a \emph{local} level and they do not support arbitrary
\hofs \cite{kobren2017hierarchical, zhang1996birch}.

In this paper we introduce \alg, a hierarchical, incremental
(non-greedy) clustering algorithm that can cluster with any \hof. \alg
builds a cluster tree over the incoming \records, one at a time,
attempting to keep similar \records near one another in the
tree. Robustness to suboptimal data arrival order is achieved by
employing both local \emph{and global} tree rearrangements.  Local
rearrangements are performed using a \rotate subroutine, which
recursively swaps a child with its aunt. Global rearrangements are
performed via a \graft subroutine, in which \alg may steal a subtree
from one part of the hierarchy and merge it with another similar, but
distant, subtree. Grafting is a key for both our theoretical and
empirical results, and supports the discovery of clusters that exhibit
(single or sparse) linked structures---an important feature of
clustering algorithms used in practice~\cite{ester1996density}.

Theoretically, we define a notion of \emph{model-based separation}
that characterizes the relationship between a \hof and a dataset.  For
generality, we adopt a graph-theoretic formalism, where \records
correspond to vertices of an unknown graph whose connected components
form a ground truth clustering. Model-based separation suggests that
the \hof value is high for two item sets if the induced subgraph is
connected (see Subsection~\ref{subsec:hsep}).  We prove that under
this condition, the ground-truth clusters are a tree-consistent
partition of the hierarchy built by \alg.

In experiments, we show that \alg is efficient and builds trees with
higher dendrogram purity than other clustering algorithms on large
scale datasets.  The experiments are performed with a common and
important \hof---average linkage---as well as a \hof that
measures the cosine similarity between
two cluster centroid representations. We also perform experiments on two author
coreference datasets using learned \hofs, and demonstrate that \alg is
more efficient and accurate than the baselines. Our experiments
reveal that \alg dominates competitors that only make local tree
rearrangements, highlighting the power of the \graft subroutine and
the robustness of \alg.
\section{Linkage Functions for Clustering}
\label{sec:clustering}
\emph{Clustering} is the problem of constructing a partition $\Ccal =
\{C_1, \cdots, C_k\}$ of a dataset $\Xcal = \{x_i\}_{i=1}^{N}$, such
that $\bigcup_{C \in \Ccal} C = \Xcal$ and $\forall C, C' \in \Ccal, C
\cap C' = \emptyset$. The partition is known as a \emph{clustering} of
$\Xcal$.

Most algorithms construct clusterings using pairwise similarities
among \records. But, pairwise similarities cannot capture many
complex relationships, e.g., \records $x_1$ and $x_2$ are similar when
clustered with \record $x_3$, but are otherwise dissimilar.  A natural
generalization that can capture these types of relationships are
similarities defined over sets of \records, which we refer to as
\hofs. Formally, a \hof is a function
$f: 2^\Xcal \times 2^\Xcal \rightarrow \mathbb{R}$.

Clustering with \hofs is ubiquitous, especially in \hac (from which
the name \hof is derived).  In \hac, many popular linkage functions
like single-, complete- and average-linkage are computed from pairwise
distance functions. More complex, set-wise \hofs are used in
applications such as image segmentation, within document coreference
and entity resolution; in the latter two domains, these functions are
often learned~\cite{clark2016improving, kohli2009robust,
  haghighi2010coreference, wiseman2016learning,
  zhang2013probabilistic}.  A unique capability of \hac is that it can
easily support an arbitrary \hof. This flexibility is essential to
combat the ill-posed nature of clustering.

\subsection{Model-based Separation}
\label{subsec:hsep}
Our goal is to design an algorithm that, like \hac, can support
arbitrary \hofs, but is dramatically faster.
In developing clustering algorithms, it is often useful to consider
various assumptions about the \emph{separability} of the underlying
data.  For example, in the pairwise setting one of the strongest data
assumptions is known as \emph{strict
  separation}~\cite{balcan2008discriminative}. This assumption holds
that any \record in ground-truth cluster $C_i$ is more similar to
every other \record in $C_i$ than any \record from a different
ground-truth cluster, $C_j$. It is easy to see that popular
instantiations of \hac (e.g., single-, average- and complete-linkage)
provably succeed under strict separation, which provides some
theoretical motivation for these algorithms.

We introduce a notion of \emph{model-based separation} for clustering
with a \hof. Since \hofs may operate on data of any type, we formalize
the definition in terms of a graph, where the \records correspond to
vertices.

\begin{definition}[Model-based Separation]
\label{def:hsep}
Let $G = (\Xcal, E)$ be a graph. Let $f: 2^\Xcal \times 2^\Xcal
\rightarrow \mathbb{R}$ be a \hof that computes the similarity of two
groups of vertices and let $\phi: 2^\Xcal \times 2^\Xcal \rightarrow
\{0, 1\}$ be a function that returns 1 if the union of its arguments
is a connected subgraph of $G$. Then $f$ separates $G$ if
\begin{align*}
  \forall s_0, s_1, s_2 \subseteq \Xcal, \quad \phi(s_0, s_1) > \phi(s_0,
  s_2) \Rightarrow f(s_0, s_1) > f(s_0, s_2)
\end{align*}
\end{definition}

\noindent In words, for a \hof $f$ to separate a graph $G$, take any
two sets of vertices, $s_0$ and $s_1$, such that $s_0 \cup s_1$ is
connected in $G$, i.e., $\phi(s_0, s_1) = 1$.  Then, for any set $s_2$
such $\phi(s_0, s_2) = 0$, the score of $f$ on input $(s_0, s_1)$ must
be greater than on input $(s_0, s_2)$.

Model-based separation offers a non-standard view of
clustering. Specifically, the \records of a dataset are treated as
vertices in a graph with latent edges. The ground-truth clusters are
the connected components of the graph and the goal of clustering is
to discover these components using a \hof.

We provide the following two examples to help build intuition about
model-based separation. The examples are used throughout the remainder
of our discussion.

\begin{example}[Clique]
  \label{ex:clique}
  Consider a graph $G=(\Xcal, E)$ in which each connected component is
  a clique.  Then if $f$ separates $G$, every vertex in a connected
  component, $C_i$, is more similar to all other vertices in $C_i$
  than any vertex in connected component $C_j$, where similarity is
  defined by $f$.
\end{example}
\noindent Thus, clique-structured connected components exactly capture strict
separation.

\begin{example}[Chain]
  \label{ex:chain}
  Consider a graph $G=(\Xcal, E)$ in which each connected component is
  chain-structured.  According to Definition \ref{def:hsep}, two
  vertices that are part of the same chain but do not share an edge
  may be dissimilar under $f$ even if $f$ separates $G$. However,
  any two segments of the chain connected by an edge are
  similar under $f$.
\end{example}
\begin{figure}[t]
  \captionsetup[subfigure]{justification=centering}
  \centering
  \begin{subfigure}[h]{0.4\columnwidth}
  \centerline{\includegraphics[width=\columnwidth]{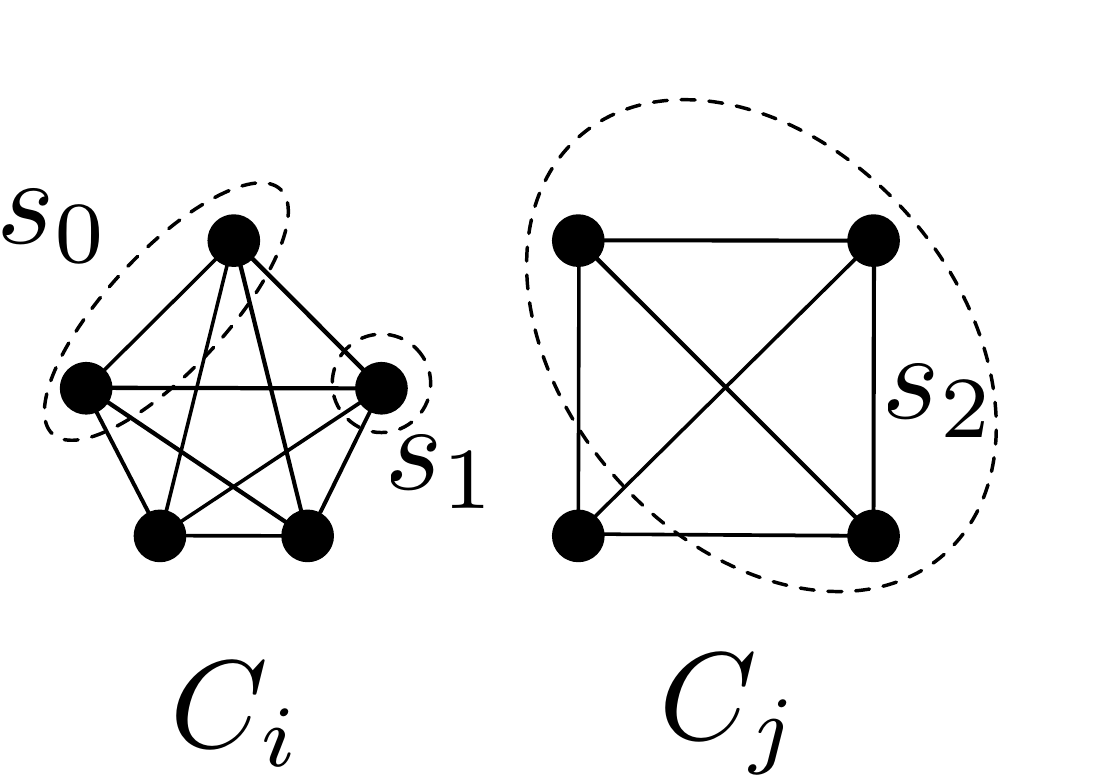}}
  \caption{Clique-shaped clusters.}
  \label{fig:clique-ex}
\end{subfigure}
\begin{subfigure}[h]{0.4\columnwidth}
  \centerline{\includegraphics[width=1.0\columnwidth]{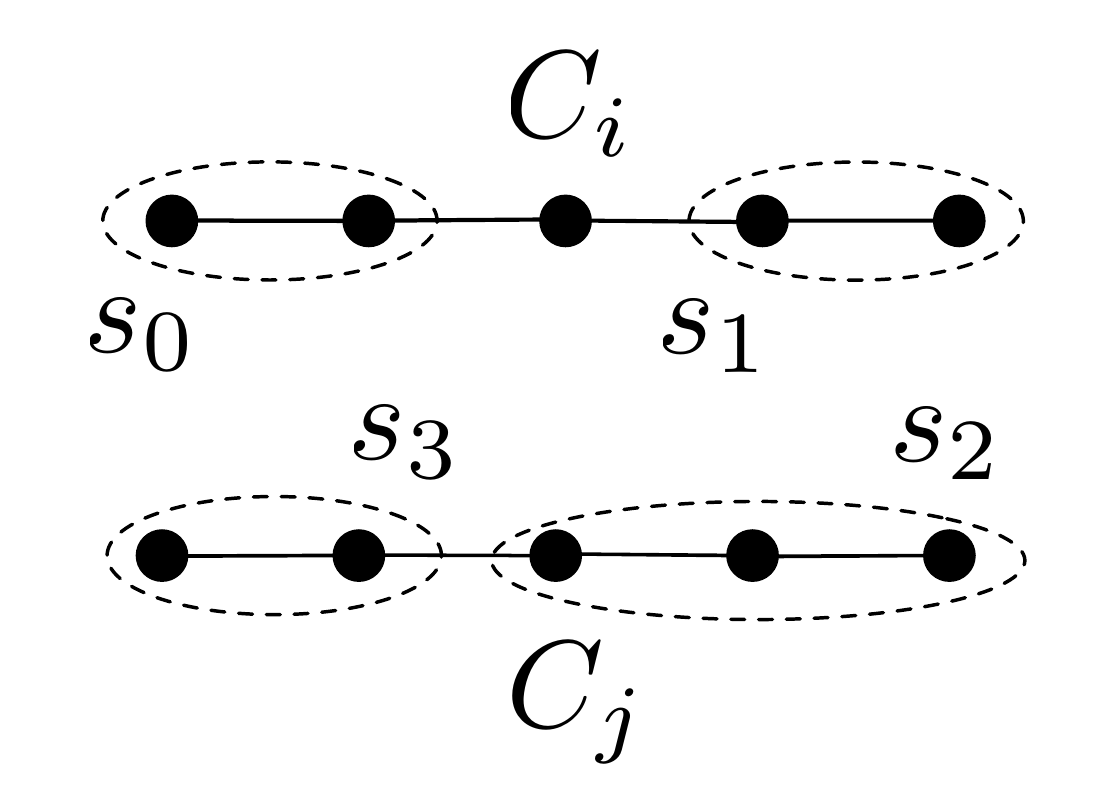}}
  \caption{Chain-shaped clusters.}
  \label{fig:chain-ex}
\end{subfigure}
\caption{\emph{Model-based separation}. Figure \ref{fig:clique-ex}
  shows two clique-shaped clusters with \records as vertices in a
  graph. If $f$ separates the graph then
  $f(s_0, s_1) > \max[f(s_0, s_2), f(s_1, s_2)]$ because $s_0$ and
  $s_1$ form a connected subgraph. In Figure \ref{fig:chain-ex}, even if
  $f$ separates the graph, it is possible for
  $f(s_0, s_1) < f(s_1, s_2)$. However, $f(s_1, s_2) < f(s_2, s_3)$.}
  \label{fig:clique-and-chain}
\end{figure}

\noindent A visual illustration of both clique and chain style clusters is
depicted in Figure \ref{fig:clique-and-chain}. As we will see, chain
structured connected components pose a challenge to existing
incremental algorithms, something we resolve with \alg (Section \ref{sec:inference}).

\subsection{Cluster Trees}
\label{subsec:clustertrees}
In most clustering problems, the appropriate number of clusters is
unknown a priori. \hac addresses this uncertainty by building a
\emph{cluster tree} over \records.

\begin{definition}[Cluster tree~\cite{krishnamurthy2012efficient}]
  A binary
\textbf{cluster tree} $\mathcal{T}$ on a dataset $\Xcal=\{x_i\}_{i=1}^N$ is a
collection of subsets such that $C_0 = \{x_i\}_{i=1}^N \in \mathcal{T}$
and for each $C_i,C_j \in \mathcal{T}$ either $C_i \subset C_j$, $C_j \subset
C_i$ or $C_i \cap C_j = \emptyset$.  For any $C \in \mathcal{T}$, if $\exists C'
\in \mathcal{T}$ with $C' \subset C$, then there exists two $C_L,C_R\in
\mathcal{T}$ that partition $C$.
\end{definition}

\noindent Given a cluster tree, $\Tcal$, any set of disjoint subtrees
whose leaves cover $\Xcal$ represents a valid clustering and is
referred to as a \emph{tree consistent
  partition}~\cite{heller2005bayesian}. Thus, cluster trees compactly
encode multiple alternative clusterings, allowing for a clustering to
be selected as a post-processing step.  Another advantage of using
cluster trees is that they often facilitate efficient search and
naturally group similar \records near one another in the hierarchy

We relate model-based separation, cluster trees and \hac in the following fact:

\begin{fact}
  \label{fact:hac}
  Let $f$ be a \hof that separates $G$. Then running \hac under
  $f$ returns a cluster tree, $\Tcal$, such that the connected
  components of $G$ are a tree-consistent partition of $\Tcal$.
\end{fact}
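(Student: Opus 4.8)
The plan is to track the working partition that \hac maintains as it proceeds, and to show it never violates the connected-component structure of $G$. Concretely, I would prove by induction on the merge steps the following invariant: every cluster in \hac's current partition is either (i) a connected subgraph of $G$ lying entirely within a single connected component, or (ii) a union of two or more complete connected components of $G$. At initialization every cluster is a singleton, which is trivially of type (i), giving the base case. The goal is then to show each merge preserves the invariant, after which the conclusion follows almost immediately.

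For the inductive step, suppose the invariant holds and \hac performs its next, globally best-scoring, merge of clusters $P$ and $Q$. If $\phi(P,Q)=1$ then $P\cup Q$ is connected and hence lies within one component, so the merged cluster is again of type (i). The delicate case is $\phi(P,Q)=0$, where I would argue that neither $P$ nor $Q$ can be a \emph{partial} component, i.e.\ a proper connected subset of some component $C$. Suppose $P\subsetneq C$. Since $C$ is connected there is a boundary edge from $P$ to $C\setminus P$, whose far endpoint lies in some other current cluster $R$; the invariant forces $R$ to be of type (i) and contained in $C$, because a type-(ii) cluster meeting $C$ would have to contain all of $C$, contradicting that $P\subseteq C$ is a separate cluster. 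As $P$ and $R$ are each connected and joined by an edge, $P\cup R$ is connected, so $\phi(P,R)=1>0=\phi(P,Q)$. Invoking model-based separation (Definition~\ref{def:hsep}) with $s_0=P$, $s_1=R$, $s_2=Q$ then yields $f(P,R)>f(P,Q)$, contradicting the choice of $(P,Q)$ as the best merge. Hence a disconnected best merge can only combine complete components, producing a type-(ii) cluster, and the invariant is preserved.

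The invariant then gives the result. Fix any connected component $C$: its vertices begin in singletons, and by the invariant a cluster containing some but not all of $C$ is never merged with a cluster outside $C$. Thus the $C$-vertices only ever merge among themselves until, at the step that first unites them, the resulting cluster contains all of $C$ and nothing else, i.e.\ equals $C$; since \hac terminates with a single cluster this moment must occur, so $C$ appears as a node of $\Tcal$ (and survives as a child node when full components are later merged). Because the components partition $\Xcal$, the nodes $\{C\}$ are disjoint subtrees covering the leaves, i.e.\ a tree-consistent partition of $\Tcal$. The main obstacle is exactly the disconnected-merge case of the inductive step: Definition~\ref{def:hsep} only compares two merges that share a common argument, so the argument must manufacture the competing within-component merge $(P,R)$ sharing the cluster $P$ with the purported best merge $(P,Q)$ in order to apply the separation hypothesis; the graph-connectivity fact that a proper subset of a connected component has a boundary edge is what supplies that $R$.
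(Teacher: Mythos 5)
Your proof is correct, and its core is the same greedy exchange idea as the paper's brief justification of Fact~\ref{fact:hac}: since $f$ separates $G$, \hac's best-first merges assemble each connected component before any cross-component merge can occur. The difference is rigor, and here it genuinely matters. The paper's two-sentence argument asserts that a merger producing a connected subgraph ``has higher score than any merger resulting in a disconnected subgraph,'' but Definition~\ref{def:hsep} does not literally give this: separation only compares $f(s_0,s_1)$ with $f(s_0,s_2)$ when the two candidate mergers share the argument $s_0$, so it cannot directly rank two arbitrary mergers against each other. Your write-up identifies exactly this issue and repairs it: the invariant (every current cluster is either a connected subset of a single component or a union of complete components), together with the boundary-edge construction, manufactures for any purported best disconnected merge $(P,Q)$ with $P$ a partial component a competing within-component merge $(P,R)$ that shares $P$, to which separation does apply, yielding $f(P,R) > f(P,Q)$ and the desired contradiction. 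So your proof is best viewed not as a different route but as the honest, complete version of the paper's sketch --- it supplies the induction and the shared-argument comparison that the paper leaves implicit, which is precisely what is needed for the conclusion to follow from the definition as written.
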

\noindent To see why, notice that in each iteration of HAC, the highest scoring
pair of remaining subtrees is merged. Since $f$ separates $G$, a
merger resulting in a subtree that corresponds to a connected subgraph
of $G$ has higher score than any merger resulting in a disconnected
subgraph of $G$. Even though HAC can construct a cluster tree that
contains the ground-truth clustering as a tree-consistent partition,
the algorithm costs $O(n^2\log n)$ for general linkage functions and
does not scale to large datasets. We will verify this claim
empirically in our experiments (Section \ref{sec:exp}).
\section{Rotations, Grafting and Grinch}
\label{sec:inference}
In this section we derive an efficient, incremental algorithm called
\alg that can be used to construct clusterings under any \hof. Like
\hac, the backbone of \alg is a cluster tree. We begin the discussion
by analyzing a greedy, incremental variant of \hac and when it
fails. Then, we introduce two subroutines, \rotate and \graft, that
can be used to enhance robustness. Finally, we present our algorithm,
\alg.

\subsection{Online HAC and Rotations}
\label{subsec:greedy}
An efficient alternative to \hac is its online variant that merges
each incoming \record with its nearest neighbor seen so far
(\greedy). For now, let us consider the setting in which a nearest
neighbor is found using a \hof, $f$.  Let $f$ separate a graph $G$
and let ground-truth clusters be cliques in $G$ (i.e., the data is
strictly separated). Even in this simple case, \greedy may construct a
cluster tree in which the ground-truth clustering is not a tree
consistent partition.  To see why, consider a stream in which the
first two \records, $x_1$ and $x_2$, are of the same ground-truth
cluster and the third \record, $x_3$ is of a different
cluster. Assume, without loss of generality, that \greedy adds $x_3$
as a sibling of $x_1$. Then the ground-truth clustering is not a tree
consistent partition of the resulting tree (and all subsequent trees).

To recover from such mistakes, local tree rearrangements may be
applied. Previous work uses \emph{rotations}, which swap a child and
its aunt in the tree, to correct local errors induced by unfavorable
arrival order~\cite{kobren2017hierarchical}. While originally designed
to be used with pairwise distances, the condition under which rotations
should be applied can be extended to \hofs:
\begin{align}
  \label{eq:rotate}
  f(v,\sib{v}) < f(v, \aunt{v})
\end{align}
where the functions $\sib{\cdot}$ and $\aunt{\cdot}$ return the
sibling and aunt of their input, respectively. In words, if a node
$v \in \Tcal$ achieves a higher score under $f$ with its aunt than
with its sibling, then the aunt and sibling should be swapped. Now,
let us revisit the example above. Since $x_1$ and $x_2$ are both
vertices in the same clique in $G$, they are connected by an
edge. Then, by model-based separation, $f(x_1, x_2) > f(x_1, x_3)$, so
a rotation will be applied, producing a tree that contains the
ground-truth clustering.

\begin{figure*}[t]
\captionsetup[subfigure]{justification=centering}
\begin{subfigure}[h]{0.32\textwidth}
  \centerline{\includegraphics[width=1.0\columnwidth]{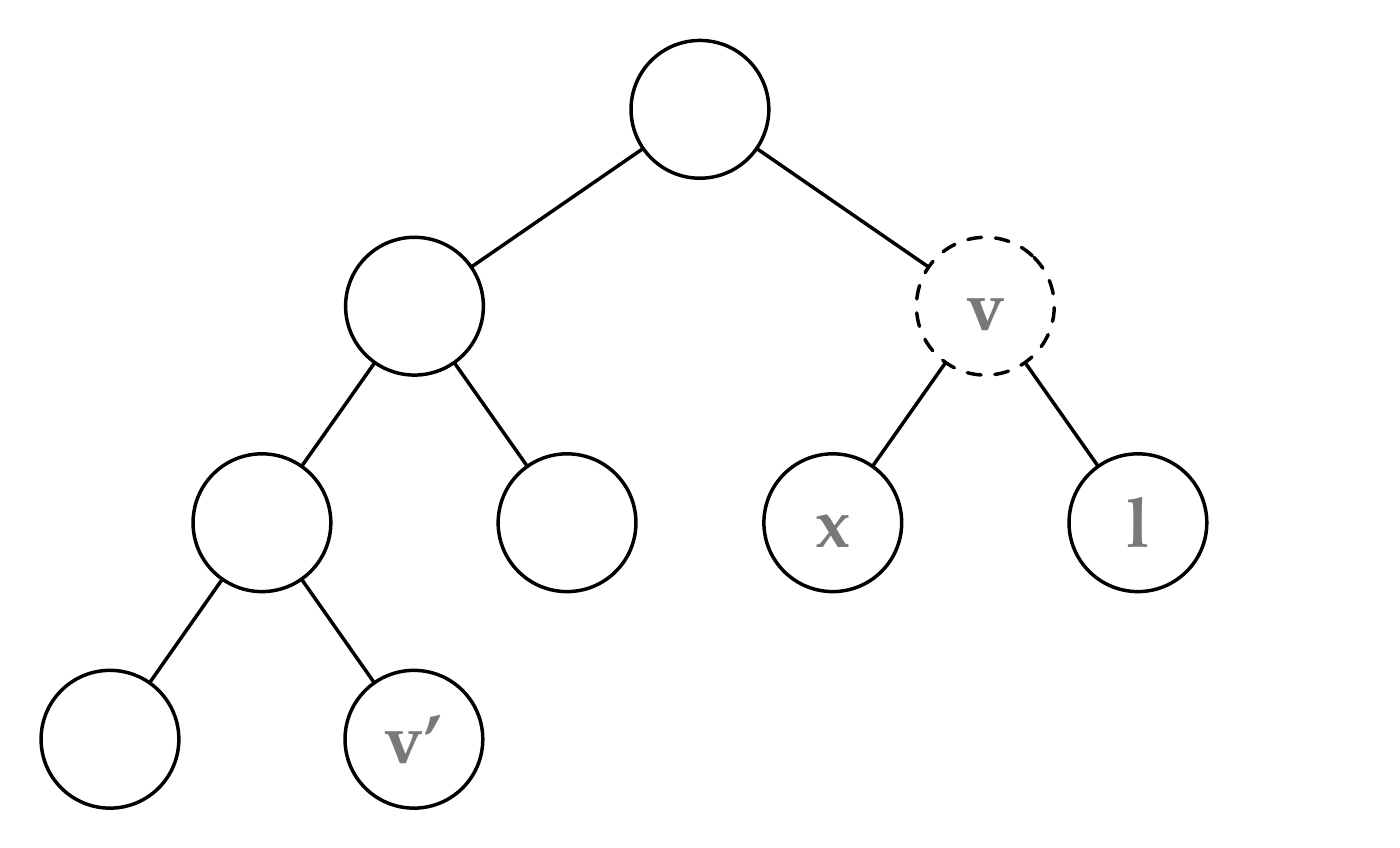}}
  \caption{$x$ added to cluster tree.}
\end{subfigure}
\begin{subfigure}[h]{0.32\textwidth}
  \centerline{\includegraphics[width=1.0\columnwidth]{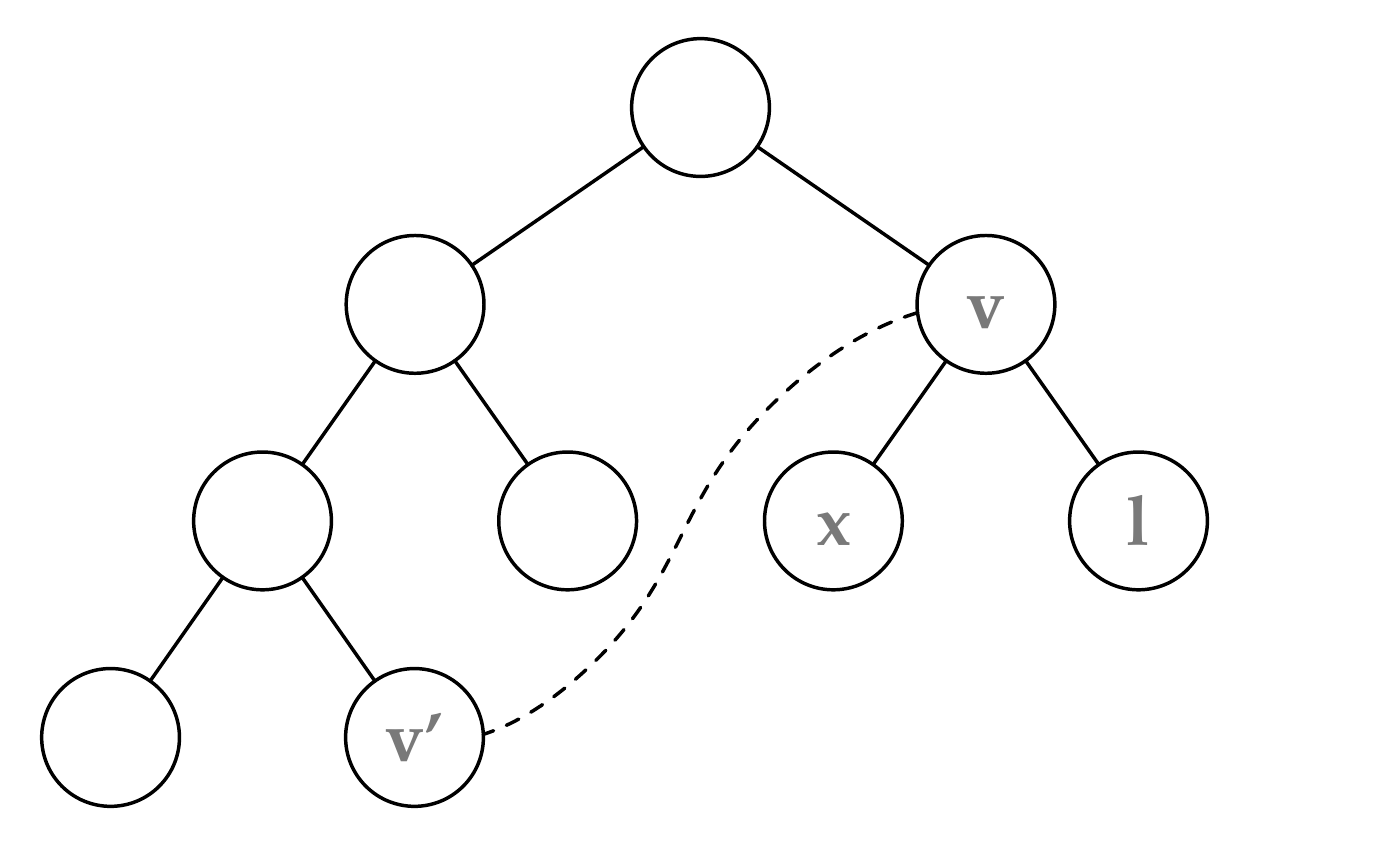}}
  \caption{$v$ finds nearest node $v'$.}
\end{subfigure}
\begin{subfigure}[h]{0.32\textwidth}
  \centerline{\includegraphics[width=1.0\columnwidth]{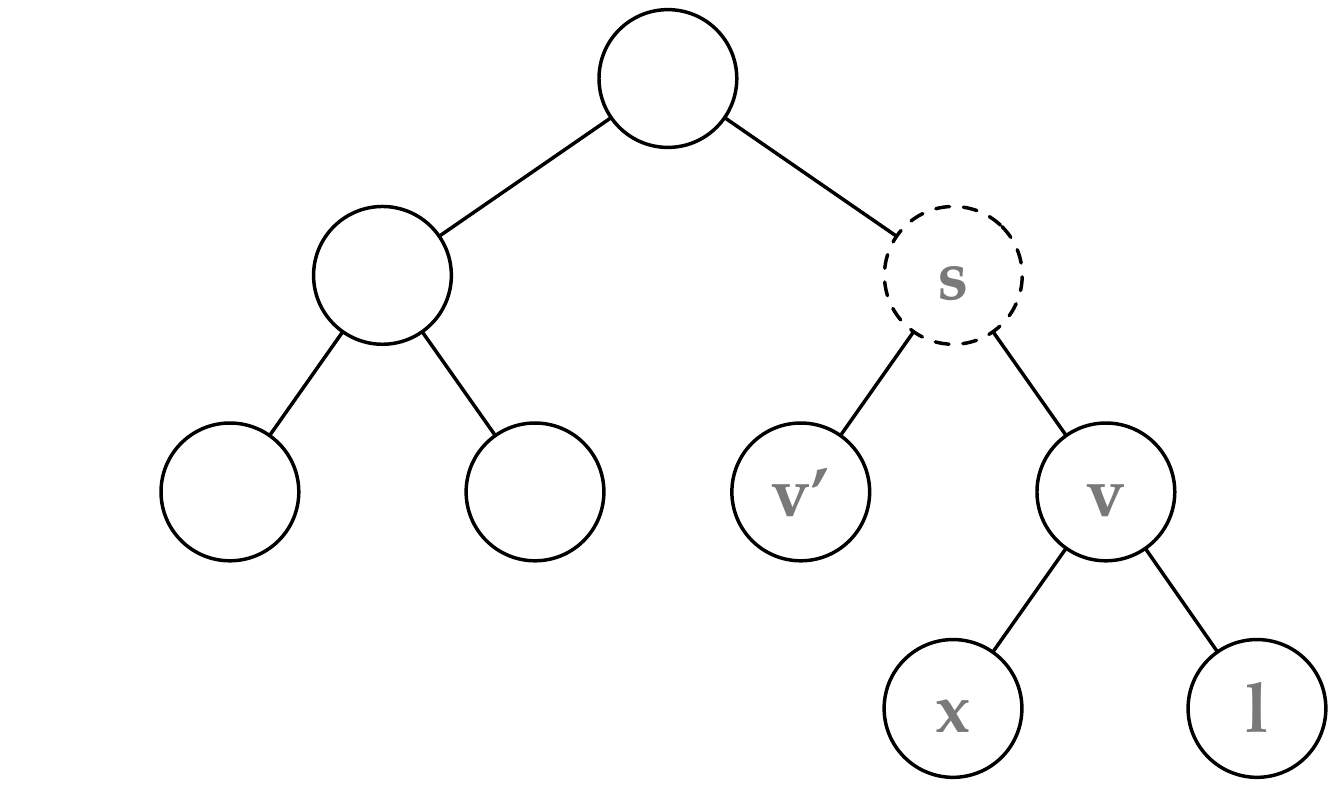}}
  \caption{$v$ is grafted to $v'$.}
\end{subfigure}
\caption{The \texttt{graft} subroutine. Dotted lines denote new
  nodes and mergers. Before $x$ is added to tree, $l$ and $v'$ reside
  in disjoint subtrees even though they belong to the same
  ground-truth cluster. The addition of $x$ creates the subtree with root
  $v$ and initiates the \texttt{graft} subroutine.}
\label{fig:graft}
\end{figure*}

Unfortunately, the \greedy algorithm, augmented with the ability to
performs rotations (\algrotate), cannot always recover the connected
components of a graph that is separated by $f$. In particular,
\algrotate cannot reliably recover chains (Example \ref{ex:chain}).
By virtue of being a local operation, rotations can only be used to
provably recover connected components that are clique-structure.

\subsection{Subtree Grafting}
\label{subsec:graft}
We introduce a non-local tree rearrangment called a \emph{graft},
which facilitates the discovery of chain-structured connected
components. At a high level, the \graft procedure with respect to a
node $v \in \Tcal$ searches $\Tcal$ for a node $v'$ that is both
similar to $v$ and dissimilar from its current sibling, $\sib{v'}$. If
such a subtree is found, $v'$ is disconnected from its parent and made
a sibling of $v$. A visual illustration of a successful \graft is
depicted in Figure \ref{fig:graft}.

In detail, a \graft searches the leaves of $\Tcal$ for the nearest
neighbor leaf of $v$ called $l$. Then it checks whether the following holds:
\begin{align}
  \label{eq:graft}
  \condition{v}{l}
\end{align}

\noindent i.e., $v$ and $l$ prefer each other to their current
siblings according to $f$. If the condition succeeds, merge $v$ and
$l$. If the condition fails because $l$ prefers its sibling to $v$,
retest the condition at $v$ and $l$'s parent, $\parent{l}$; if the
condition fails because $v$ prefers its sibling to $l$, then retest
the condition at $\parent{v}$ and $l$. Continue to check recursively
until the condition succeeds or until the first time two nodes, $v_1$
and $v_2$, are reached such that one is the ancestor of the
other. Pseudocode for the \graft subroutine can be found in
Algorithm~\ref{alg:graft}.  In the algorithm, \texttt{par} returns the
parent of a node in the tree, \texttt{lca} returns the lowest common
ancestors of its arguments and \texttt{makeSib} merges its arguments
and returns their new parent. \texttt{NN} performs a nearest neighbor
search and \texttt{constrNN} performs a nearest neighbor search that
excludes its second argument from the result.

\subsection{Tree Restructuring}
\label{subsec:restruct}
While the \graft subroutine facilitates discovery of chain-structured
clusters, poorly structured trees are susceptible to having the \graft
subroutine disconnect previously discovered ground-truth clusters.  As
an example, consider Figure \ref{fig:bad-struct}, in which $\lvs{v}$
form the connected subgraph $C_i$ (i.e., they all belong to the same
ground-truth cluster). Consider $v$'s left child, $v.l$, and its
descendants, which form a \emph{disconnected} subgraph. An attempt to
\graft either descendent, $x_1$ or $x_2$, may succeed, even when
initiated from a node (not depicted) whose descendants are not
connected to $C_i$. After such a \graft, $\Tcal$ cannot contain a
tree-consistent partition that matches the ground-truth clustering.

\begin{figure}[t]
  \captionsetup[subfigure]{justification=centering}
  \centering
\begin{subfigure}[h]{0.7\columnwidth}
  \centerline{\includegraphics[width=0.7\columnwidth]{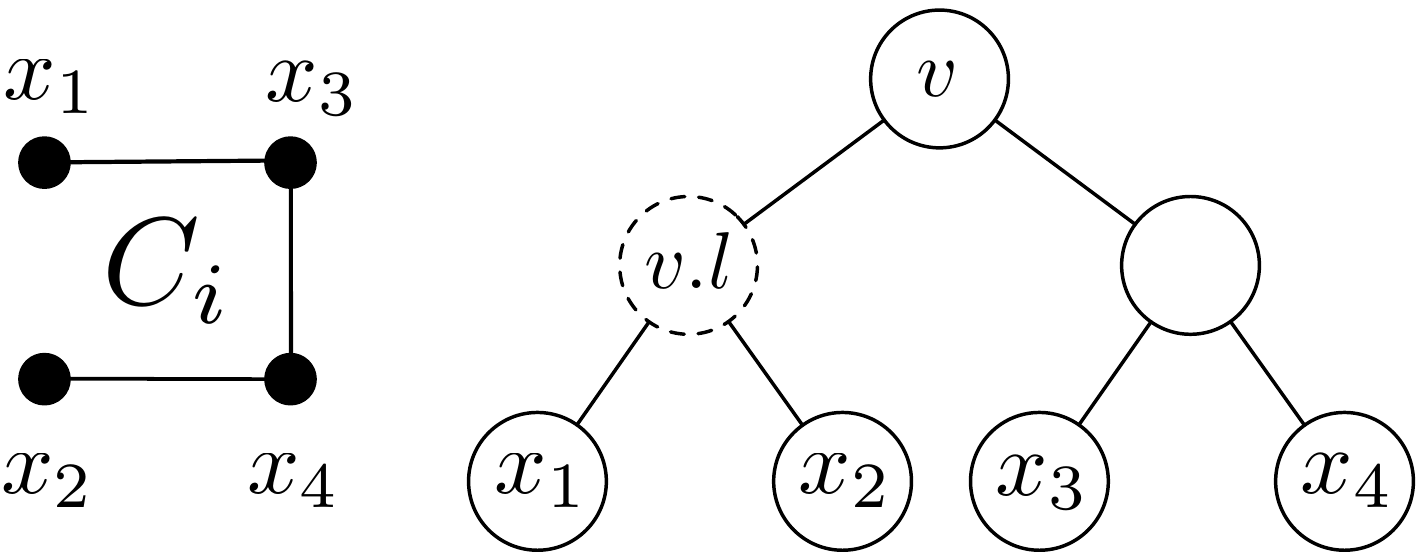}}
\end{subfigure}
\caption{\emph{Poorly structured tree}. Even though $v$'s leaves form
  a connected subgraph of the graph on the left of the Figure (i.e.,
  they all belong to cluster $C_i$), $v.l$'s descendent leaves, $x_1$
  and $x_2$, are disconnected. An attempt to \texttt{graft} either
  $x_1$ or $x_2$ from a node whose descendants are not members of
  $C_i$ may succeed.}
\label{fig:bad-struct}
\end{figure}

\begin{algorithm}[t!]
   \caption{\graft$(v , \Tcal, f)$}
   \label{alg:graft}
\begin{algorithmic}
  \STATE $l = \conNN (v, \lvs{v}, f, \Tcal)$
  \STATE $v'= \lca{v}{l}\texttt{; st = v}$
  \WHILE{$v \neq v' \wedge l \neq v' \wedge \sib{v} \neq l$}
    \IF{$f(v, l) > \max[f(v, \sib{v}, f(l, \sib{l})]$}
      \STATE $z = \sib{v} \texttt{; }v\, = \merge{v}{l}$
      \STATE $\texttt{restruct}(z, \lca{z}{v}, f)$
      \STATE $\texttt{break}$
    \ENDIF
    \IF{$f(v, l)\, <\, f(l, \sib{l})$}
      \STATE $l\, = \parent{l}$
    \ENDIF
    \IF{$f(v, l)\, <\, f(v, \sib{v})$}
      \STATE $v\, = \parent{v}$
    \ENDIF
   \ENDWHILE
    \IF{$v == st$}
      \STATE {\bfseries Output:} $v'$
    \ELSE
      \STATE {\bfseries Output:} $v$
    \ENDIF
\end{algorithmic}
\end{algorithm}

Notice that a subtree can defend against spurious \grafts by ensuring
that each of its descendant subtrees is connected. For example, in
Figure \ref{fig:bad-struct}, if $x_2$ and $x_3$ were swapped, then
each descendant subtree of $v$ would be connected. Moreover, after
such a swap, \grafts from nodes whose descendants were not part of
$C_i$ would necessarily fail (assuming that $f$ separates the
graph).

During tree construction, the only step that can result in a connected
subtree with disconnected descendants is the \graft subroutine (a
rigorous proof is included in the supplement). We introduce the \rst
(\emph{restructure}) subroutine, which is performed after a successful
\graft, and reorganizes a subtree with the intent of making each of
its descendants connected. Let $v'$ be a node that was just grafted,
$v$ be the previous sibling of $v'$ (i.e., before the graft) and let
$r=\lca{v}{v'}$ be the current least common ancestor of $v$ and
$v'$. \rst is initiated from $v$. First, the siblings of the ancestors
of $v$ (until $r$) are collected. Then, we find the node in the
collection most similar to $v$. If that node is more similar to $v$
than $v$'s current sibling (according to $f$), the two are
swapped. The intuition here is that if a \graft left $v$ and its new
sibling disconnected, then the swap serves as a mechanism to restore
the connectedness of $v$'s parent. Such swaps are attempted from the
ancestors of $v$ until $r$.  Pseudocode appears in
Algorithm~\ref{alg:restruct}.

\subsection{Grinch}
\label{sec:implementation}
Using the \rotate, \graft and \rst tree rearrangement routines discussed in
Section \ref{sec:inference}, we derive a new algorithm called \alg,
which stands for: \textbf{G}rafting and \textbf{R}otation-based
\textbf{INC}remental \textbf{H}iearchical clustering.  The steps of
the algorithm are as follows: when a new record, $x_i$, arrives, find
$x_i$'s nearest neighbor, $l$, among the leaves of $\Tcal$. Add $x_i$
to $\Tcal$ as a sibling of $l$.  Then, apply the \rotate subroutine
while Equation \ref{eq:rotate} is true. Finally, attempt to \graft
recursively from each ancestor of $x_i$.  Each time a \graft is
successful, restructure the tree to group similar items
together. Pseudocode for \alg can be found in Algorithm \ref{fig:alg}.

\begin{figure}
\begin{algorithm}[H]
   \caption{\texttt{restruct}$(z, r, f)$}
   \label{alg:restruct}
\begin{algorithmic}
  \WHILE{$z != r$}
  \STATE $as = \{\sib{a}\, \texttt{for}\, a \in \ancs{z} \backslash \ancs{r}\}$
  \STATE $m = \argmax_{a \in as} f(z, a)$
  \IF{$f(z, \sib{z})\, <\, f(z, m)$}
    \STATE $\texttt{swap}(\sib{z}, m)$
  \ENDIF
  \STATE $z = \parent{z}$
  \ENDWHILE
\end{algorithmic}
\end{algorithm}
\end{figure}
\begin{figure}
\begin{algorithm}[H]
   \caption{\texttt{Insert}$(x_i , \Tcal, f)$}
   \label{fig:alg}
\begin{algorithmic}
   \STATE $l = $\texttt{ NN}$(x_i, f, \Tcal) \texttt{; } t = \texttt{ makeSib}(x_i, l)$
   \WHILE{$f(x_i, \sib{x_i}) < f(\aunt{x_i}, \sib{x_i})$}
   \STATE \rotate$(x_i, \aunt{x_i})$
   \ENDWHILE
   \STATE $p = \parent{x_i}$
   \WHILE{$p \ne \texttt{null}$}
   \STATE $\texttt{curr} = \graft(p, \Tcal, f)$
   \ENDWHILE
\end{algorithmic}
\end{algorithm}
\end{figure}

\begin{theorem}
\label{thm:hsep}
Let $\Xcal = \{x_i\}_{i=1}^{N}$ be a dataset with ground-truth
clustering $\cstar = \{C_1, \cdots, C_k\}$. Let $f$ separate a graph
$G$ on vertices $\Xcal$ and let each cluster $C \in \cstar$ be a
connected component in $G$.  Then \alg recovers a cluster tree such
that $\cstar$ is a tree consistent partition of $\Tcal$ regardless of
the input order.
\end{theorem}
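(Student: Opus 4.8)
The plan is to prove that \alg maintains an invariant: after inserting each \record, every node of the cluster tree $\Tcal$ whose leaves form a \emph{connected} subgraph of $G$ in fact corresponds to a subset of a single ground-truth component, and conversely that the tree never places two distinct components as leaves of a node unless that node's leaf set is disconnected. The target statement follows because if $\cstar$ is to be a tree-consistent partition, I need each $C_i \in \cstar$ to appear as (the leaf set of) some node of $\Tcal$ with no ground-truth component ever ``split across'' a merge that also contains foreign points. So the real content is: once two \records of the same component are siblings, the tree rearrangements progressively pull in the rest of that component, and the \graft plus \rst operations never destroy an already-assembled component.

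\medskip

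First I would establish the base case and the local behavior. Using model-based separation (Definition~\ref{def:hsep}) exactly as in the \greedy/\algrotate discussion, I would show that whenever a newly inserted $x_i$ is connected to its nearest-neighbor leaf $l$, the rotation condition in Equation~\ref{eq:rotate} drives $x_i$ toward its true component: since $\phi$ of a connected pair beats $\phi$ of a disconnected pair, $f$ ranks the connected sibling strictly higher, so rotations fire precisely in the direction that groups same-component \records. This handles the clique case (Example~\ref{ex:clique}) and reproves Fact~\ref{fact:hac}-style reasoning locally. The key new step is the chain case: I would argue that the \graft subroutine, searching from each ancestor $p$ of $x_i$ for its nearest leaf and testing the mutual-preference condition in Equation~\ref{eq:graft}, eventually merges two subtrees that are connected in $G$ but were separated by arrival order. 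Because $f$ separates $G$, any graft that would join two \emph{disconnected} subtrees is blocked by the $\max$ in the condition (the existing connected sibling scores higher), while a graft joining two connected subtrees passes; this is what lets chains reassemble even though non-adjacent chain vertices may be dissimilar (Example~\ref{ex:chain}).

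\medskip

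The main obstacle, which I expect to dominate the argument, is showing that \graft and \rst never break apart a component that has already been correctly assembled --- precisely the failure mode illustrated in Figure~\ref{fig:bad-struct}. The plan is to maintain the structural invariant that after each insertion, \emph{every} descendant subtree of a connected node is itself connected. I would prove this by induction on insertions: assuming it holds before inserting $x_i$, the only operation that can create a connected node with a disconnected child is a successful \graft (as the excerpt asserts and defers to the supplement), and I would show that the \rst call fired immediately after each successful graft restores the invariant. Concretely, \rst collects the siblings of the ancestors of the grafted-from node $v$ up to $r=\lca{v}{v'}$ and swaps in the most similar one whenever it beats the current sibling; by model-based separation the most similar available node is a connected one when such exists, so each swap replaces a disconnected sibling with a connected one, re-establishing connectivity level by level up to $r$. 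Once this invariant is in hand, I would close the argument: because no node ever mixes two components as connected leaves, and because rotations and grafts monotonically pull each component together, every $C_i$ ultimately realizes as the leaf set of a single node, making $\cstar$ a tree-consistent partition of the final $\Tcal$ independent of arrival order.
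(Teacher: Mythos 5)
Your overall architecture matches the paper's: the invariant you settle on in your third paragraph (every descendant subtree of a connected node is itself connected) is exactly what the paper calls \emph{strong connectivity}, and your induction-on-insertions plan --- rotations preserve the invariant, only a successful \graft can break it, \rst repairs it, and the repair argument you sketch is essentially the paper's Restructuring Lemma --- is the paper's decomposition. However, your treatment of the \graft step, which is the core of the theorem, rests on two claims that do not hold as stated. First, you assert that ``any graft that would join two disconnected subtrees is blocked by the $\max$ in the condition (the existing connected sibling scores higher).'' This is false: separation only forces $f(v',\sib{v'}) > f(v,v')$ when $v' \cup \sib{v'}$ is connected and $v \cup v'$ is not. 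When \emph{both} candidate unions are disconnected (for instance, $v$'s leaves already form a complete component, or the new point is a singleton component), separation imposes no ordering at all, and such grafts can succeed. The paper's Grafting Lemma 1 does not claim they fail; it shows they are \emph{harmless}: what is actually blocked is stealing a strongly connected node that is not maximal (a proper piece of an already-assembled connected subtree), because that node prefers its connected sibling, while grafts that move maximal strongly connected nodes or non-strongly-connected nodes preserve both invariants even though they join disconnected sets.

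Second, you assert that ``a graft joining two connected subtrees passes.'' This is unjustified: the test is the mutual condition $f(v,l) > \max[f(v,\sib{v}),\, f(l,\sib{l})]$, and if $l$ is also connected to its own sibling, separation does not order $f(v,l)$ against $f(l,\sib{l})$, so the test can fail even when $v \cup l$ is connected. The paper's Grafting Lemma 2 closes exactly this hole using the recursion in Algorithm~\ref{alg:graft}: whenever the test fails because $l$ prefers $\sib{l}$, separation implies (by contrapositive) that there is an edge between $\lvs{l}$ and $\lvs{\sib{l}}$, so $\parent{l}$ is strongly connected, and the recursion moves up while maintaining that both frontier nodes are strongly connected and joined by an edge; hence either some merge succeeds and creates a strongly connected parent, or the recursion terminates at $\lca{v}{l}$, which is already strongly connected. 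Either way each \graft invocation returns a strongly connected node whose leaves strictly contain $\lvs{v}$. This per-call progress is also precisely what your closing sentence (``grafts monotonically pull each component together'') needs but never establishes: without it you have no termination argument showing that each component is eventually assembled as the leaf set of a single node, i.e., no proof of what the paper calls completeness, which is the actual content of the theorem.
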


The proof of Theorem \ref{thm:hsep} can be found in the appendix.
\section{Experiments}
\label{sec:exp}
We experiment with \alg to assess its scalability and accuracy. We
begin by demonstrating that \alg outperforms other incremental
clustering algorithms on a synthetic dataset. Observing that some of
the steps of \alg are underutilized, we present 4 approximations of
\alg's algorithmic components. We apply each approximation in turn and
show that together they dramatically improve \alg's scalability
without compromising its clustering quality. Then, we compare the
approximate variant of \alg to state-of-the-art large scale
hierarchical clustering methods. To showcase the flexibility of \alg,
we also provide experimental results in entity resolution, where the
linkage function is learned.  Finally, we provide analysis of the
\graft subroutine--\alg's distinguishing feature--and perform
experiments to demonstrate the algorithm's robustness.

\paragraph{Dendrogram Purity}
Before beginning, we briefly review \emph{dendrogram purity}, a
preferred method of holistically evaluating hierarchical
clusterings~\cite{blundell2011discovering, heller2005bayesian,
  kobren2017hierarchical}. Dendrogram purity is computed as follows:
Let $\cstar = \{C_1, \cdots, C_k\}$ be the ground-truth clustering of
a dataset $\Xcal$, and let
$\mathcal{P}^\star = \{(x,x') | x,x' \in \Xcal,\ \cstar(x) =
\cstar(x') \}$ be the set of all \record pairs that belong to the same
ground-truth clusters.  Then the dendrogram purity (DP) of a cluster
tree, $\Tcal$ is:
\begin{align*}
\textrm{DP}(\Tcal) = \frac{1}{|\mathcal{P}^\star|} \sum_{(x,x') \in
\mathcal{P}^\star} \pur{\lvs{\lca{x}{x'}}, \cstar(x)}
\end{align*}
where \lca{x}{x'} returns the least common ancestor of $x$ and $x'$ in
$\Tcal$, \lvs{\cdot} returns the descendant leaves of its argument,
and $\pur{\cdot, \cstar(x)}$ takes a collection of leaves and computes
the fraction that belong to ground-truth cluster $\cstar(x)$.

\subsection{Synthetic Data Experiment}
In our first experiment, we compare \alg to other incremental
hierarchical clustering algorithms on a synthetic dataset in order to
begin to understand \alg's empirical performance characteristics in a
controlled manner. The data is generated so that it satisfies
model-based separation with respect to cosine similarity. In
particular, the dataset contains 2500 10000-dimensional binary vectors
that belong to 100 clusters, with 25 points per cluster. Points in
cluster $k$ have bits $100k$ to $100(k-1)$ set randomly to 1 with
probability $0.1$. All other bits are set to 0. This way, across
cluster points have cosine similarity 0 and within cluster points can
have either 0 or non-zero cosine similarity. In other words, two
points, $x_1$ and $x_2$, in the same cluster can appear to be
dissimilar and end up in distant regions of the tree. The
representation of each internal node in the \alg tree is the sum of
the vectors of its descendent leaves. Thus, compute the cosine
similarity between two nodes $v$ and $v'$ as the cosine similarity
between their aggregated vectors (
we refer to this as \emph{cosine linkage} in the following sections
). We compare \alg, \algrotate and
\greedy.

The experimental results reveal that \alg achieves perfect dendrogram
purity (1.0), which is expected given \alg's correctness guarantee.
\algrotate achieves a dendrogram purity of 0.872 while \greedy
achieves 0.854. \algrotate and \greedy do not construct trees of
perfect purity because of their inability to globally rearrange a
cluster hierarchy.

\subsection{Approximations}
Some of the algorithmic steps of \alg, which are required to prove its
correctness, are seldom invoked in practice. For example, and perhaps
expectedly, a \graft is unlikely to succeed between two nodes close to
the root of the tree. Therefore, we introduce handful of
approximations designed to have little effect on the quality of the
clusterings constructed by \alg, but also designed to make the
algorithm significantly faster in practice.

\begin{enumerate}[noitemsep,topsep=0pt,parsep=0pt,partopsep=0pt,leftmargin=*]
\item \textbf{Capping.} Recursive subroutines like \graft and \rotate improve performance,
but they are also computationally expensive to check, and often
fail. Moreover, we notice that tree rearrangements that occur close to
the root do not have a significant, instantaneous effect on dendrogram
purity. Therefore, we introduce \emph{rotation}, \emph{graft} and
\emph{restructure caps}, which prohibit rotations, grafts and
restructures from occurring above a height, $h$.
\item \textbf{Single Elimination Mode.}
The \graft subroutine generally improves \alg's clustering
performance, and is essential in attaining perfect purity on the
synthetic dataset, but we find that \graft attempts are rejected many
more times than they are accepted. However, at times, we observe that
a sequence of recursive \grafts are accepted when initiated close to
the leaves. Therefore, to limit the number of attempted \grafts while
retaining these \graft sequences, we introduce \emph{single
  elimination mode}. In this mode, the recursive grafting procedure
terminates after a \graft between $v$ and $v'$ fails because both
prefer their current siblings to a merge.
\item \textbf{Single Nearest Neighbor Searching.}
\alg makes heavy use of nearest neighbor search under the linkage
function $f$. Rather than perform nearest neighbor search anew for
each \graft, when a \record arrives, we perform a single $k$-NN search
($k \in [25, 50]$) and only consider these nodes during subsequent
\grafts (until the next \record arrives).
\item \textbf{Navigable Small World Graphs.}
Instead of performing nearest neighbor computations exactly, we can
perform them approximately.  To this end, we employ a \emph{navigable
  small world} nearest neighbor graph (NSW)--a data structure inspired
by decentralized search in small world
networks~\cite{watts1998collective, kleinberg2000small,
  kleinberg2006complex}. To find the nearest neighbor of a
\record, $x_i$, in an NSW, begin at a random node, $v$.  If the similarity
between $x_i$ and $v$ is maximal among all neighbors of $v$,
terminate; otherwise, move to the neighbor of $v$ most similar to
$x_i$. To insert a new \record, $x_j$, find its $k$ nearest neighbors
and add edges between those neighbors and a new
\record~\cite{malkov2014approximate}. Thus, NSWs are constructed
online. In practice, we simultaneous construct a hierarchical
clustering and an NSW over the \records stored in the tree's leaves.
\end{enumerate}

\begin{table}
  \centering
  \begin{tabular}{l c c c c c}
    \textbf{ALOI}\\
    \hline
    \textbf{Approx.}    & \textbf{DP} & \textbf{Time (s) } & \textbf{\# Rotate} & \textbf{\# Graft} & \textbf{\# Restr.} \\
    \alg (No Approx).  & 0.533       &	85.371             &	7107             &	2435          &	1088             \\
\hspace{.2cm} w/ Cap (100) & 0.533 &	48.452 & 	6495 &	2157 &	686 \\
\hspace{.2cm} w/ Single Elimn & 0.534 &	39.019 &	6574 &	1586 &	533 \\
\hspace{.2cm} w/ Single NN &0.540 &	22.226 &	6441 &	1516 &	570 \\
\hspace{.2cm} w/ no Restruct &0.538 &	14.292 &	6477 &	1634 &	0   \\
\hspace{.2cm} w/ no Graft & 0.506 &	12.748 &	6747 &	0 &	0   \\
\hspace{.2cm} w/ no Rotate & 0.442 &	14.793 &	0 &	0 &	0   \\
    \\
    \textbf{Synthetic}\\
    \hline
    \textbf{Approx.} & \textbf{DP} & \textbf{Time (s) } & \textbf{\# Rotate} & \textbf{\# Graft} & \textbf{\# Restr.}\\
    \alg (No Approx).  & 1.0         & 160.307              & 2558                & 578   &  203\\
\hspace{.2cm} w/ Cap (100) & 0.993 & 164.328 & 2558 & 578 & 194\\
\hspace{.2cm} w/ Single Elimn & 0.997 & 157.622 & 2523 & 526 & 184\\
\hspace{.2cm} w/ Single NN & 0.993 & 83.014 & 2517 & 415 & 148\\
\hspace{.2cm} w/ no Restruct & 0.993 & 82.262 & 2476 & 426 &  0\\
\hspace{.2cm} w/ no Graft & 0.872 & 82.055 & 2259 & 0   &  0\\
\hspace{.2cm} w/ no Rotate & 0.854 & 80.526 & 0 & 0 & 0\\

    \hline
  \end{tabular}
\caption{\emph{Ablation.} Each row in the table represents \alg with
  the corresponding approximation applied in addition to all
  approximations contained in previous rows. The first 4
  approximations significantly decreases the computational cost of
  \alg, but do not compromise DP. The ablation is performed for the
  first 5000 points of ALOI and the Synthetic datasets.}
\label{fig:approx_tbl2}
\end{table}

To measure the effects of our approximations on the speed and quality
of the resulting algorithm, we conduct the following ablation. We run
\alg on our synthetically generated dataset as well as a random 5k
subset of the \textsc{ALOI}~\cite{geusebroek2005amsterdam} dataset and
measure dendrogram purity, time, and the number of calls made to
\rotate, \graft and \rst.  We repeat the procedure multiple
times, each time adding one of the following approximations, in order:
capping, single elimination, single nearest neighbor search and
approximate nearest neighbor search. Capping and is performed at
height 100.  We also experiment with removal of the \graft and \rotate
subroutines.

The result of the ablation is contained in
Table~\ref{fig:approx_tbl2}. We observe that, for both datasets, each
of the approximations reduces the computational cost of algorithm
without effecting the resulting DP. However, once \grafts are removed,
the DP drops by 3\% on ALOI and 12\% on the synthetic datasets. When
\rotate is also removed, DP drops by an additional 6\% and 2\%,
respectively.

Having verified that on a subset of ALOI our approximations improve
scalability at little expense in terms of dendrogram purity, in the
following experiments we report results for \alg in single elimination
mode and with the rotation cap set to $h=100$.

\subsection{Large Scale Clustering}
\label{subsec:xcluster}

\begin{table*}[t]
  \centering
  \scriptsize
	\begin{tabular}{c c c c c c c c c c}
		\hline
		\textbf{Alg. (link.)} & \textbf{CovType} & \textbf{ILSVRC12 (50k)}  & \textbf{ALOI} & \textbf{Speaker}   & \textbf{ImgNet (100k)}\\
		\hline
		\textbf{\alg} (Avg)  & 0.43 $\pm$ 0.00 & \bf 0.557 $\pm$ 0.003 & \bf 0.504 $\pm$ 0.002 &   0.480 $\pm$ 0.003 & \bf 0.065 $\pm$ 0.00\\
		\textbf{\alg} (CS)   & 0.43	$\pm$ 0.00 & \bf 0.544 $\pm$ 0.005 & \bf 0.499 $\pm$ 0.003 & \ 0.478 $\pm$ 0.003 &   0.062 $\pm$ 0.00\\
		\textbf{\algrotate} (Avg) & 0.43 $\pm$ 0.01 & 0.545 $\pm$ 0.004  &0.476 $\pm$ 0.004 & 0.407 $\pm$ 0.003 & 0.063 $\pm$ 0.00\\
		\textbf{\algrotate} (CS)  & 0.44 $\pm$ 0.01 & 0.513 $\pm$ 0.007 & 0.472 $\pm$ 0.003 & 0.406 $\pm$ 0.003 & 0.062 $\pm$ 0.00\\
		\textbf{\greedy}          & 0.44 $\pm$ 0.01 & 0.527 $\pm$ 0.00 & 0.435 $\pm$ 0.004 &  0.317 $\pm$ 0.002 & 0.0589\\
		\textbf{\perch} \cite{kobren2017hierarchical} & 0.45 $\pm$ 0.00 & 0.53 $\pm$ 0.003 &  0.44 $\pm$ 0.004 &   0.37 $\pm$ 0.002 & \bf 0.065$\pm$0.00\\
		\textbf{\perch}-BC \cite{kobren2017hierarchical}& 0.45 $\pm$ 0.00 & 0.36 $\pm$ 0.005 & 0.37 $\pm$ 0.008 & 0.09 $\pm$ 0.001 & 0.03 $\pm$ 0.00\\
		\textbf{\mbhac} (Best) \cite{kobren2017hierarchical} & 0.44 $\pm$ 0.01 & 0.43 $\pm$ 0.005 & 0.30 $\pm$ 0.002 & 0.01 $\pm$ 0.002 &  --- &\\
		\textbf{HAC} (Avg) \cite{kobren2017hierarchical} & -- & \bf 0.54 &  -- & \bf 0.55 &  --  \\
          \hline
	\end{tabular}
	\caption{Dendrogram Purity results for \alg and baseline
          methods. We compare two linkage functions: approximate
          average linkage (Avg) and cosine similarity linkage
          (CS). }
	\label{tab:xclusterdp}
\end{table*}

We compare \alg with the following 4 algorithms: \textbf{\greedy} - an
online hierarchical clustering algorithm that consumes one \record at
a time and places it as a sibling of its nearest neighbor;
\textbf{\algrotate} - an incremental algorithm that places a \record
next to its nearest neighbor and then performs rotations until
Equation \ref{eq:rotate} holds;
\textbf{\mbhac} - the mini-batch version of
\hac, which keeps a buffer of size $b$, runs a single step of \hac using
the \records in the buffer and then adds the next record to the buffer;
 \textbf{\hac} - best-first, bottom-up hierarchical agglomerative
clustering and \textbf{\perch} - a state-of-the-art large scale hierarchical clustering method.

We run each algorithm on 5 large scale clustering datasets: CovType,
a datset of forest covertype,
ALOI~\cite{geusebroek2005amsterdam}, a 50K subset of the Imagenet
ILSVRC12 dataset~\cite{russakovsky2015imagenet} and the Speaker
dataset~\cite{greenberg2014nist}, and a 100K subset of
ImageNet containing all 17K classes not just the subset
in ILSVRC12. Datasets have 500K, 50K, 100K, 36K, and 100K
instances, respectively. We run each \hac variant under two different
\hofs: average linkage and cosine linkage.  To compute the cosine
similarity between two nodes, $v$ and $v'$, first, for each node,
compute the sum of the vectors contained at their descendant
leaves. Then, compute the cosine similarity between the aggregated
vectors.

Results are displayed in Table~\ref{tab:xclusterdp}, where we record
the dendrogram purity averaged over 5 replicates of each algorithm,
where for each replicate we randomize the arrival order of the
data. %
The table reveals that \alg--under both linkage functions--outperforms
the corresponding versions of \algrotate and \greedy on all datasets
except for on the CovType dataset where the methods all seem to perform
equally well. This underscores
the power of the \graft subroutine. \alg with approximate nearest
neighbor search even outperforms \perch, which uses exact nearest
neighbor search, on ALOI.  Recall that, unlike the \hac variants,
\perch employs a specific linkage function. Seeing as the \hac
variants outperform \perch on Speaker suggests that the ability to
equip various \hofs can be advantageous. \hac is best on Speaker, but
cannot scale to ALOI.

\subsection{Author Coreference}
\label{subsec:coref}
Bibliographic databases, like PubMed, DBLP, and Google Scholar,
contain citation records that must be attributed to the corresponding
authors. For some records, the attribution process is easy, but for
many others, the identities of a publication's authors are ambiguous.
For example, DBLP contains hundreds of citations written by different
authors named ``Wei Wang'' that currently cannot be
disambiguated~\cite{dblp:wei}. Intuitively, author coreference
datasets often exhibit chain like structures because a single citation
written by a prolific author (perhaps in a short-lived collaboration)
may only be similar to a small number of that author's other citations
and dissimilar from the rest.

Following previous work, we train a \hof to predict the likelihood
that a group of citation records were all written by the same
author~\cite{culotta2007author, singh2011large,
  wick2012discriminative}.  We train our model by running \hac and, at
each step, use the model to predict the precision of merging two
groups of records. (A similar training technique was previously
proposed for entity and event coreference~\cite{lee2012joint}.) Our
model has access to features like: coauthor names and publication
title, venue, year, etc.

We compare the 5 \hac variants in author coreference on two datasets
with labeled author identities: \textbf{Rexa}~\cite{culotta2007author}
and \textbf{PSU-DBLP}~\cite{han2005name}.  As is standard in author
coreference we evaluate the methods using the pairwise F1-score of a
predicted flat clustering against the ground-truth clustering, which
is the harmonic mean of precision and recall.  To compute pairwise
F1-score, each pair of citations that appear in both the same
ground-truth and predicted clusters is considered a true positive; each
pair of citations that belong to different ground-truth clusters but
the same predicted cluster is considered a false positive. None of the
authors represented in the test set, have any publications in the
training set.

Figure~\ref{tab:acorefres} shows the precision, recall, and pairwise
F1-score achieved by each method. The results show that \alg
outperforms the other scalable methods on both datasets and even
outperforms \hac on DBLP. This behavior may stem from overfitting of
the learned \hof, which is exploited by \hac; since \alg only
approximates \hac, it can be thought of as a form of regularization.
Again, we observe that \alg outperforms \greedy and \algrotate on both
datasets underscoring the importance of the \rotate and \graft
procedures.

\begin{table*}[t]
	\centering
	\begin{tabular}{ c c c c c c c c c}
		\hline
	            & \multicolumn{3}{c}{Rexa}                  & \multicolumn{3}{c}{DBLP} \\
		\textbf{Algorithm} & \textbf{Pre} & \textbf{Rec}  & \textbf{F} &  \textbf{Pre} & \textbf{Rec}  & \textbf{F} \\
		\hline
		\textbf{\alg}           &  0.808 &	0.883 &	\bf 0.844 $\pm$ 0.004  & 0.809 & 0.620 & \bf 0.701 $\pm$ 0.013\\
		\textbf{\algrotate}      & 0.864 &	0.641 &	0.734 $\pm$ 0.057       & 0.876 & 0.554 & 0.678 $\pm$ 0.019 \\
		\textbf{\greedy}       & 0.850 &	0.209 &	0.331 $\pm$  0.094    & 0.827 &	0.151 &	0.255 $\pm$ 0.027  \\
		\textbf{\mbhac-Med.}         & 0.807 &	0.881 &	\bf 0.843 $\pm$ 0.0009   & 0.375 & 0.631 & 0.461 $\pm$	0.072   \\
		\textbf{\mbhac-Sm.}        & 0.922 &	0.333 &	0.483  $\pm$ 0.061   & 0.697 & 0.151 &0.247 $\pm$ 0.004  \\
		\hline
		\textbf{\exact}       & 0.805 &	0.887 &	\bf 0.844      &  0.741 &	0.600 &	0.664 \\
		\hline
	\end{tabular}
	\caption{Precision, recall and F-Score of various methods on the Rexa and DBLP datasets.}
	\label{tab:acorefres}
\end{table*}

\subsection{Significance of Grafting}
\label{subsec:sig-graft}
The results above indicate that \alg--even when employing a number
of approximations--constructs trees with higher dendrogram purity than
other scalable methods in a comparable amount of time. Interestingly,
\alg only differs from \rotate in its use of the \graft (and
subsequent \rst) subroutine. To better understand the
significance of \grafting, we compare \alg and \rotate on the
first 5000 points of ALOI.

\begin{figure*}[t]
	\captionsetup[subfigure]{justification=centering}
	\begin{subfigure}[h]{0.49\textwidth}
		\centerline{\includegraphics[width=\textwidth]{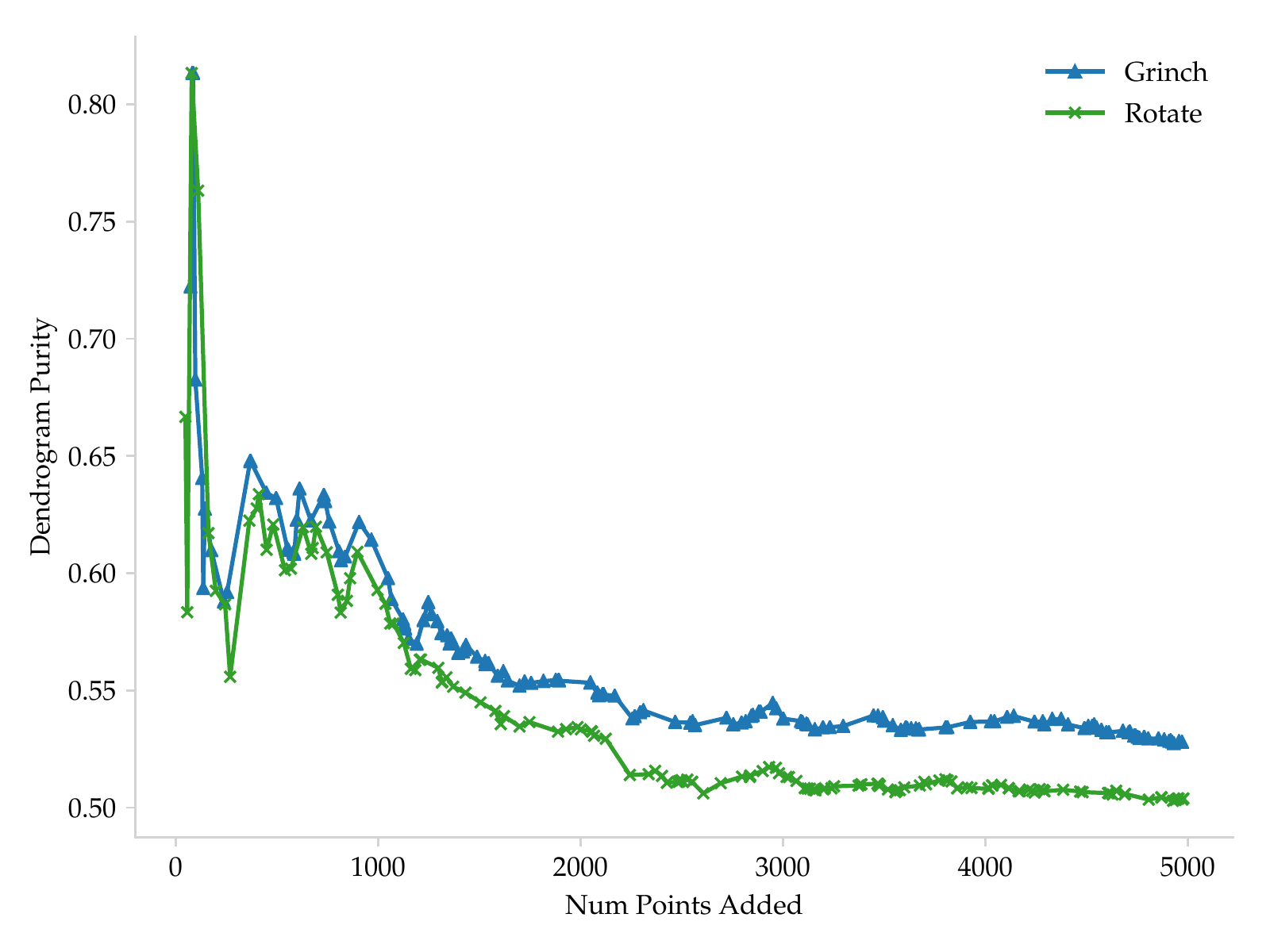}}
		\caption{Dendrogram purity per point.}
		\label{fig:cost}
	\end{subfigure}
	\begin{subfigure}[h]{0.49\textwidth}
		\centerline{\includegraphics[width=\textwidth]{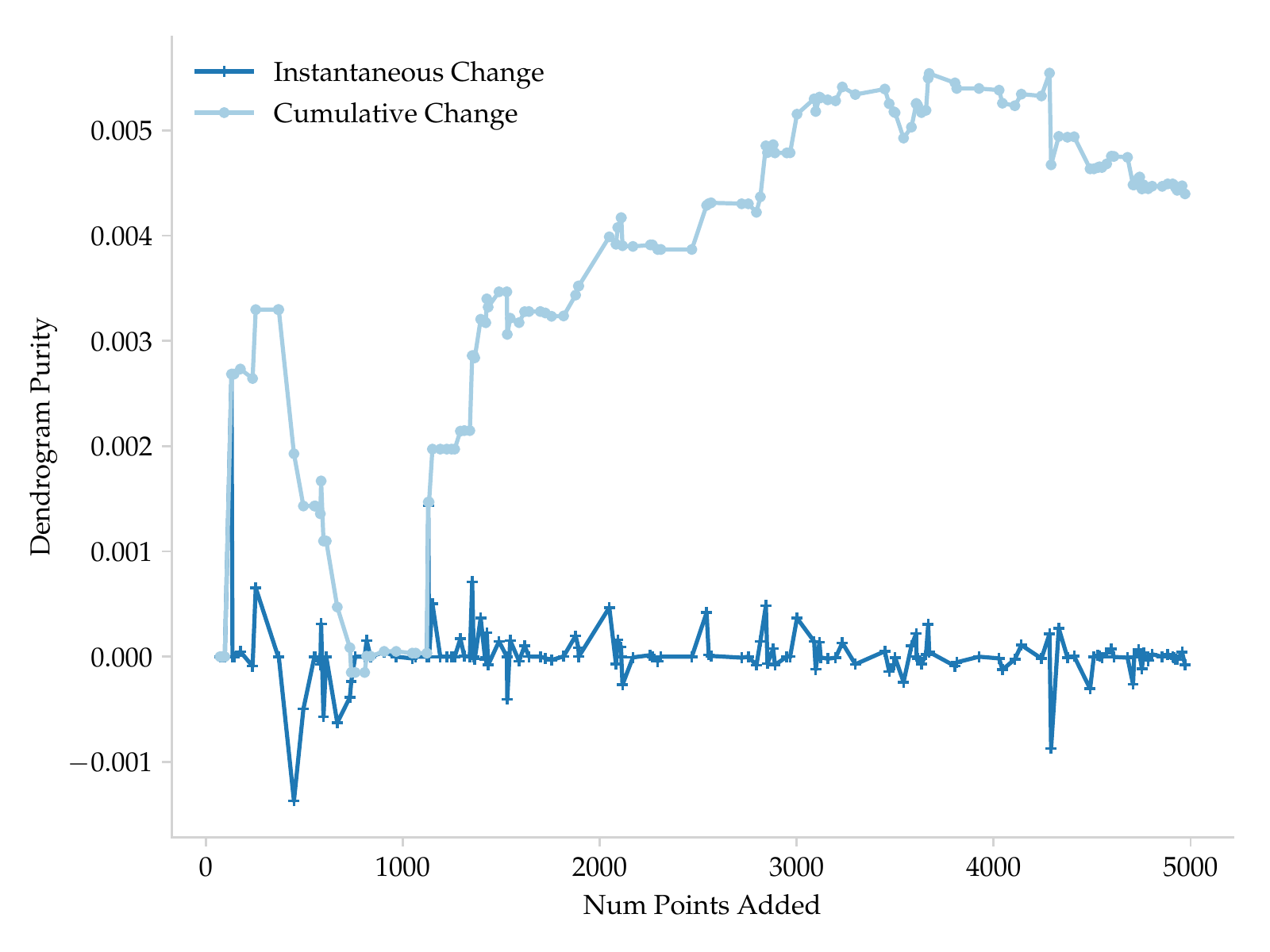}}
		\caption{Instant./cumulative change in DP due to \grafts.}
		\label{fig:grafts}
	\end{subfigure}
	\caption{Figure \ref{fig:cost} shows the dendrogram purity of
          two trees, one built by \alg and the other built by \rotate,
          on the first 5000 points of ALOI. The dendrogram purity of
          the tree built \alg is greater than that of the tree built
          by \rotate. Figure \ref{fig:grafts} plots the instantaneous
          and cumulative change in dendrogram purity due to
          \grafts. While \alg achieves ~3\% larger dendrogram purity
          than \rotate}
\end{figure*}

Figure \ref{fig:cost} shows that dendrogram purity as a function of
the number of data points inserted for both \alg and \rotate and the
first 5000 points of ALOI. Echoing the results above, by ~1000 points,
\alg dominates \rotate.

Figure \ref{fig:grafts} shows the instantaneous and cumulative change
in dendrogram purity due to \grafts made by \alg. That is, for the
$i$th data point, $x_i$, we record the dendrogram purity after $x_i$
is inserted and rotations are performed (i.e., what would be executed
by \rotate). Then, we perform \grafting (if appropriate) and record
the dendrogram purity \emph{after} all recursive \grafts have been
completed. The difference between the dendrogram purity after
\grafting and before \grafting (but after rotations) is the
instantaneous change in dendrogram purity due to \grafts; the sum of
instantaneous changes is the cumulative change.

Note the $y$-axis of Figure \ref{fig:grafts}, which reveals that even
the most instantaneously significant \grafts only lead to minute
changes in dendrogram purity (of about 0.001). Moreover, after 5000
points, the cumulative change in dendrogram purity due to \grafts is
less than 0.005--hardly accounting for the difference in dendrogram
purity between the tree built by \alg and the tree built by \rotate
(of ~0.03). We conclude from these measurements that the increase in
performance due to the \graft subroutine is related to the
rearrangement of small numbers of points. These rearrangements do not
immediately have significant impact on dendrogram purity, but they do
have significant long-term affects. To make this hypothesis more
concrete, consider the case in which two dissimilar data points from
the same cluster are split between two distant regions of the tree
early on in clustering. The points are never merged (via a \graft) and
so each point draws a significant portion of the cluster's other
\records to its location in the tree. This has dire consequences with
respect to dendrogram purity.  If a \graft is performed early on to
correct the split, an adverse scenario like this can be averted.

\subsection{Robustness}
\label{subsec:robust}
For completeness, we perform an experiment used in previous work to
test an incremental clustering algorithm's robustness to data point
arrival order~\cite{kobren2017hierarchical}. In the experiment, a
dataset is ordered in two specific ways:
\begin{description}
	\item[Round-Robin] Randomly determine an ordering of
          ground-truth clusters. Then, construct a data point arrival
          order such that the $i$th data point is a member of cluster $i$
          \texttt{mod} $K$, where $K$ is the number of clusters and
          \texttt{mod} returns the remainder when its first argument
          is divided by its second.
	\item[Sorted] Randomly determine an ordering of ground-truth
          clusters. All points of cluster $C_i$ arrive before any
          point of cluster $C_{i+1}$ arrives.
\end{description}

\begin{table}
\centering
  \begin{tabular}{c c c }
\hline
\bf Method  & Round. & Sort. \\
\hline
\alg      & 0.503 & 0.457 \\
\perch        & 0.446  &  0.351 \\
MB-HAC (5K) & 0.299  &  0.464 \\
MB-HAC (2K) & 0.171  &  0.451 \\
\hline
\end{tabular}
\caption{DP for adversarial arrival orders (ALOI).}
\label{tab:robust-dp}
\end{table}
\noindent As in previous work, we perform a robustness experiments
with the ALOI dataset. Table \ref{tab:robust-dp} shows that \alg
achieves higher dendrogram purity than both \perch and mini-batch HAC
(with 2 different batch sizes) on data ordered using the Round Robin
ordering scheme. Under this arrival order, MB-HAC performs poorly
showing its lack of robustness. When the data is in Sorted
order--which makes for easier clustering for MB-HAC--\alg outperforms
\perch and is competitive with MB-HAC.
\section{Related Work}
\label{sec:related}
The family of online and incremental clustering methods is diverse,
however all algorithms in this family optimize for specific
\hofs. \perch, from which the \rotate procedure is inspired, performs
rearrangments to satisfy a condition similar to
complete-linkage~\cite{kobren2017hierarchical}. BIRCH is another
top-down hierarchical clustering algorithm that attempts to minimize a
$k$-center style cost at each node in the
tree~\cite{zhang1996birch}. BIRCH also includes a non-greedy
reassignment step but has been shown to produce low quality trees in
practice. Liberty et al propose a flat clustering algorithm that
optimizes $k$-means cost. Since their algorithm runs in the online
setting, after a data point arrives and is assigned to a cluster, it
may never be reassigned~\cite{liberty2016algorithm}. While not
incremental, some work focuses on designing highly scalable algorithms
for specific linkage functions. Particular attention is paid to
single-linkage because of its connection to the minimum spanning tree
problem. For example, recent work develops massively parallel
algorithms for single-linkage~\cite{bateni2017affinity}.

When clustering with \hofs, probabilistic approaches can provide an
alternative to \hac. For example, split-merge Markov Chain Monte Carlo
(MCMC) methods perform clustering by randomly splitting and merging
clusters according to a proposal function~\cite{jain2004split}. An
algorithm similar to split-merge MCMC has even been used for author
coreference~\cite{wick2012discriminative}. This algorithm employs a
custom \hof on structured records and works by maintaining a
forest--each tree corresponding to a cluster--and randomly proposing
mergers and splits of various branches. Unlike \alg, this algorithm
relies on sampling to escape local minima. As the number of items
grows, the likelihood of sampling a merge or split that will be
accepted decreases rapidly.

Our work is partially inspired by complex \hofs that are used for
clustering. One example is Bayesian hierarchical clustering (BHC)--a
recursive, probabilistic, hierarchical model for
data~\cite{heller2005bayesian}. Fitting BHC models is performed by
running \hac with BHC as the \hof.  Because \hac is inefficient,
randomized approaches for fitting BHC have also been proposed, but
each of these methods still runs \hac as a subroutine on small,
randomly selected subsets of
data~\cite{heller2005randomized}. \hac-style algorithms are also used
to do probabilistic, hierarchical community detection and alongside
learned models for entity resolution~\cite{blundell2013bayesian,
  lee2012joint}.

Model-based separation is related to recently proposed definitions of
\emph{perfect hierarchical clustering structure}
\cite{cohen2018hierarchical,wang2018improved}, in which pairwise
similarities between data points lead to a tree that can be discovered
by \hac that has minimal cost. The costs used in these works are
variants of Dasgupta's cost~\cite{dasgupta2015cost}. Perfect
hierarchical clustering structures are a special case of model-based
separation, in which single-, average-, or complete-linkage is
used. Model-based separation is strictly more general, allowing for
linkage functions that compute the similarity of two point sets
arbitrarily, rather than as a function of pairwise data point
similarities.

\section{Conclusion}
\label{sec:conclusion}
This paper introduces \alg, an incremental algorithm for hierarchical
clustering under any \hof. The algorithm relies on two subroutines,
\rotate and \graft, that help it to discover complex cluster structure
regardless of data arrival order. We introduce model-based separation
for clustering with \hofs and prove that \alg always returns a tree
with perfect dendrogram purity when running in the separated setting.
We describe an efficient implementation of \alg and present an
empirical evaluation demonstrating that \alg is more accurate than
other baseline approaches and more scalable than \hac. We believe that
\alg is an asset for large clustering problems in which the
\records engage in complicated relationships and clusters are best
modeled by learned \hof.

Source code for \alg is available at: \url{https://github.com/iesl/grinch}.

\newpage

\bibliographystyle{ACM-Reference-Format}
\bibliography{ms}

\newpage

\begin{appendix}
\label{sec:appendix}

\section{Proof of Theorem \ref{thm:hsep}}
Define the following properties:

\begin{definition}[Strong Connectivity]
\label{def:strong}
Let $G=(\Xcal,E)$ be a graph and let $\Tcal[v]$ be a tree rooted at a
node $v$ with leaves, $\lvs{v}=\Xcal' \subseteq \Xcal$.  $v$ is
\textbf{connected} if $\Xcal'$ is a connected subgraph of $G$. $v$ is
\textbf{strongly connected} if every descendant of $v$ is connected. $v$
is a \textbf{maximal} strongly connected node if $v$ is strongly
connected and $\parent{v}$ is not strongly connected. Finally, the
tree $\Tcal$ satisfies \textbf{strong connectivity} if all connected
nodes in $\Tcal$ are strongly connected.
\end{definition}

\begin{definition}[Completeness]
\label{def:strong}
Let $G=(\Xcal,E)$ be a graph and let $\Tcal[v]$ be a tree rooted at a
node $v$ with leaves, $\lvs{v}=\Xcal' \subseteq \Xcal$.  Then $v$ is
\textbf{complete} if $\Xcal'$ is a connected component in $G$. The tree
$\Tcal$ satisfies \textbf{completeness} if the set of connected components
of $G$ are (the leaves of) a tree consistent partition of $\Tcal$.
\end{definition}

According to Theorem \ref{thm:hsep}, \alg always constructs a tree
that satisfies completeness. To prove the theorem, we will show that
after the addition of each new \record, the resulting tree satisfies
strong connectivity and completeness. We analyze various subroutines
of \alg and demonstrate how they preserve strong connectivity,
completeness or both.  In the proceeding lemmas and proofs, let
$G=(\Xcal, E)$ be a graph and let $f$ be a model that separates $G$.

\begin{lemma}[Rotation Lemma]
\label{thm:rotate}
Let $\Tcal$ be a tree with $\lvs{\Tcal} = \Xcal$, and let $x$ be a new
\record to be added to $\Tcal$. Then all nodes that were strongly
connected before the addition of $x$ are strongly connected after the
addition of $x$, i.e., rotations preserve strong connectivity.
\end{lemma}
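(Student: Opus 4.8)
The plan is to prove the statement by induction on the sequence of edits that \texttt{Insert} performs when processing $x$: the placement of $x$ as a sibling of its nearest-neighbor leaf $l$, followed by the individual rotations executed by the while-loop. The invariant I maintain is exactly the conclusion — that every node which is strongly connected in the tree before the edit is strongly connected afterward. The single tool that drives every step is the contrapositive of Definition~\ref{def:hsep}: since $\phi(s_0,s_1) > \phi(s_0,s_2) \Rightarrow f(s_0,s_1) > f(s_0,s_2)$, we also have $f(s_0,s_1) \le f(s_0,s_2) \Rightarrow \phi(s_0,s_1) \le \phi(s_0,s_2)$, so a strict score inequality can never be paired with a strict drop in connectedness.

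First I would pin down the structural anatomy of one rotation. A rotation swaps the sibling of the rotated node $v$ with its aunt; this merely relocates two entire subtrees, leaving their internal structure — hence the leaf sets and connectivity of all of their internal nodes — untouched, and it preserves the grandparent's leaf set. Therefore the only leaf set that a rotation can alter is that of the single node $\parent{v}$, whose leaves change from $\lvs{v}\cup\lvs{\sib{v}}$ to $\lvs{v}\cup\lvs{\aunt{v}}$. Because every relocated subtree and every node outside this local window retains its status automatically, preserving strong connectivity across a rotation reduces to controlling this one node.

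Next I would show that this node stays connected. The loop fires a rotation exactly when $f(v,\sib{v}) < f(v,\aunt{v})$; applying the contrapositive of separation with $s_0=\lvs{v}$, $s_1=\lvs{\sib{v}}$, $s_2=\lvs{\aunt{v}}$ gives $\phi(v,\sib{v}) \le \phi(v,\aunt{v})$. If $\parent{v}$ was connected before the swap, then $\lvs{v}\cup\lvs{\sib{v}}$ is connected, i.e.\ $\phi(v,\sib{v})=1$, which forces $\phi(v,\aunt{v})=1$ — precisely the statement that $\parent{v}$'s new leaf set $\lvs{v}\cup\lvs{\aunt{v}}$ is connected. Combining this with the observation that the swapped-in subtree keeps its internal structure yields that $\parent{v}$ remains strongly connected, and composing over the whole rotation sequence closes the induction. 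The base case — placing $x$ next to its nearest neighbor $l$ — is handled by noting that separation forces $l$ to be adjacent to $x$ whenever $x$ is adjacent to \emph{any} vertex already present (otherwise that neighbor would outscore $l$), so each affected ancestor of $l$ keeps a connected leaf set after $x$ is attached.

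I expect the main obstacle to be the bookkeeping at the one node whose leaf set changes. Two things must be checked carefully: that it is genuinely the \emph{only} altered leaf set, so that no other descendant of a strongly connected node is silently disconnected; and — more delicately — that the subtree swapped in is itself strongly connected, so that $\parent{v}$'s children are \emph{all} connected rather than merely their union. The separation argument above only certifies that the union $\lvs{v}\cup\lvs{\aunt{v}}$ is connected, so ruling out the case where $x$ stitches together the pieces of an otherwise-disconnected aunt is exactly where the prior \emph{global} strong connectivity of the tree (together with completeness) is needed, rather than connectivity of a single node. The remaining corner case, where $x$ is adjacent to nothing currently in the tree, must be dispatched by showing the rotations carry $x$ up and out of every previously connected node it was attached to.
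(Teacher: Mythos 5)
Your overall plan---induct over the individual edits, observe that a rotation changes only one node's leaf set, and use the contrapositive of Definition~\ref{def:hsep} to show that a rotation which fires cannot disconnect that node---matches the paper's treatment of its first case (when $x$ has an edge to some leaf of the maximal strongly connected node $v$ under which it lands), and your base-case claim that nearest-neighbor placement puts $x$ beside an adjacent leaf is the paper's as well. The genuine gap is the ``remaining corner case'' you defer at the end: it is not a loose end but half of the paper's proof (its Case~2), and under the rotation trigger you adopted it cannot be dispatched at all. You fire a rotation when $f(v,\sib{v}) < f(v,\aunt{v})$ (Equation~\ref{eq:rotate}) and swap $\sib{v}$ with $\aunt{v}$. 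If the new point $x$ has no edge to any leaf of $v$, then taking the rotated node to be $x$, both unions $\lvs{x}\cup\lvs{\sib{x}}$ and $\lvs{x}\cup\lvs{\aunt{x}}$ are disconnected, $\phi$ equals $0$ on both, and separation imposes no ordering between the two scores: no rotation is forced. Worse, under your convention a rotation never moves $x$ upward (it only hands $x$ a new sibling), so ``carrying $x$ up and out'' is impossible by construction. The point $x$ then remains lodged as the sibling of a leaf deep inside $v$, making $\parent{x}$ a disconnected descendant of $v$; hence $v$, strongly connected before the insertion, is not strongly connected after, and the lemma fails for the algorithm you are analyzing.

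The paper escapes this because its proof follows the \texttt{Insert} pseudocode (Algorithm~\ref{fig:alg}), whose trigger is the sibling-centric comparison $f(x,\sib{x}) < f(\aunt{x},\sib{x})$ and whose rotation swaps $x$ with $\aunt{x}$, lifting $x$ one level. In the corner case, since $v$ was strongly connected there is an edge between $\lvs{\sib{x}}$ and $\lvs{\aunt{x}}$, so $\phi(\sib{x},\aunt{x}) = 1 > 0 = \phi(x,\sib{x})$ and separation \emph{forces} the rotation; each such rotation restores the pre-insertion pairing below and pushes $x$ up, so inductively $x$ exits $v$ and $v$'s original strongly connected subtree is restored. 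So the discrepancy between Equation~\ref{eq:rotate} and the pseudocode is substantive, not cosmetic, and the forcing direction of separation (connected beats disconnected, applied to the \emph{sibling's} choices) is the missing idea. A secondary point: your worry that the swapped-in aunt might not be strongly connected does not require completeness or global strong connectivity. The paper localizes the entire analysis to a maximal strongly connected node $v$: while $x$ sits inside $v$, $\aunt{x}$ is a descendant of $v$ untouched by the insertion, hence strongly connected; once $x$ has risen above $v$, any node whose leaf set a further rotation alters is an ancestor of $v$ and therefore was not strongly connected to begin with, so nothing the lemma protects is at stake there.
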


Note: while rotations preserve strong connectivity, they do not
guarantee completeness. Therefore rotations are insufficient for
proving Theorem \ref{thm:hsep}.

\begin{proof}
Let $v$ be a maximal strongly connected node in $\Tcal$ and assume
that $x$ is added as a leaf of $v$ (rotations have not yet been
applied). Consider two cases:
\begin{enumerate*}[label=(\arabic*)]
\item there exists an edge between $x$ and some leaf in $\lvs{v}$,
  and
\item there does not exist an edge between $x$ and any leaf in
  $\lvs{v}$.
\end{enumerate*}

\begin{figure*}[t!]
  \captionsetup[subfigure]{justification=centering}
  \centering
  \begin{subfigure}[h]{0.5\textwidth}
  \centerline{\includegraphics[width=\textwidth]{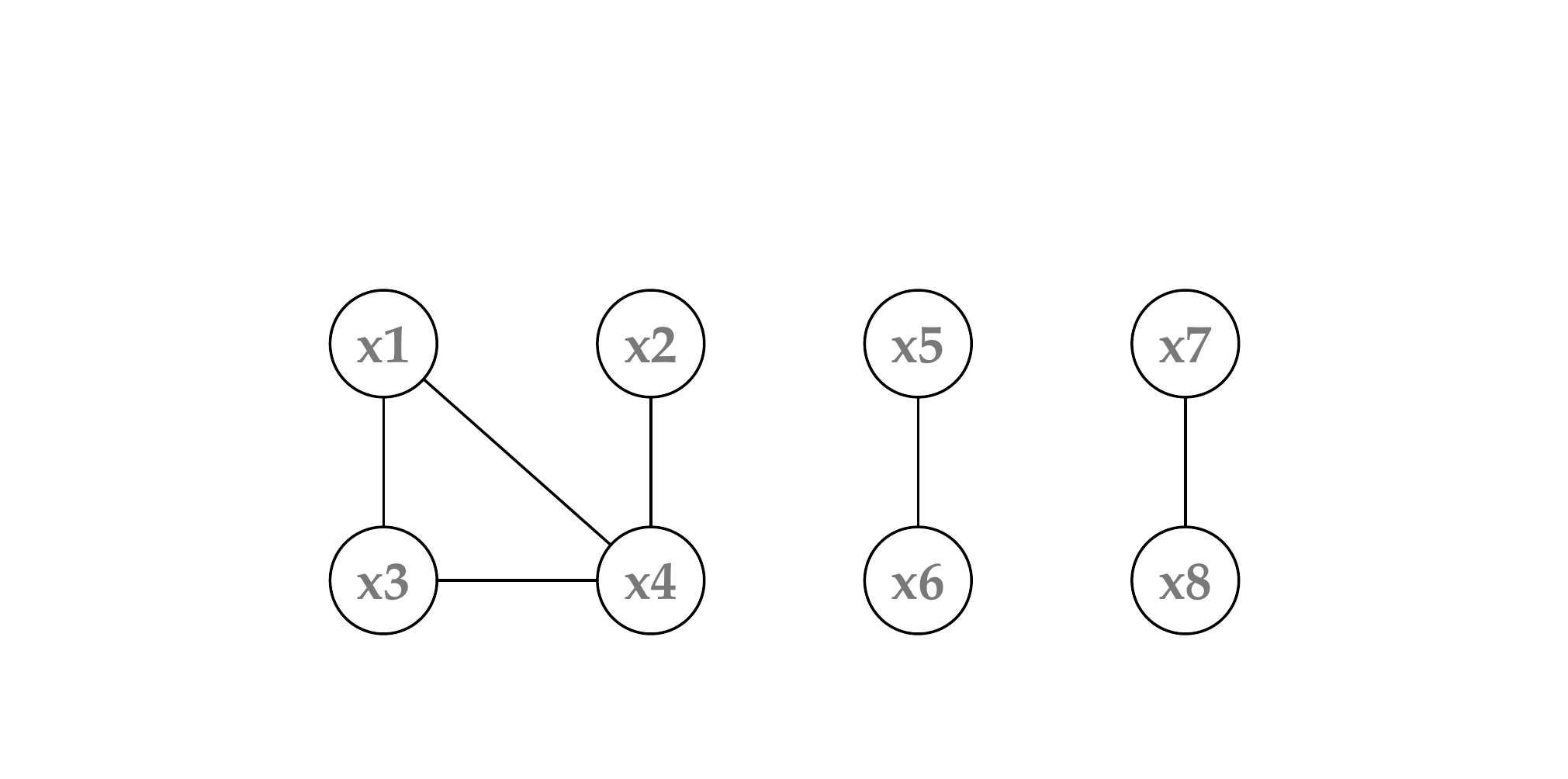}}
  \caption{A graph $G = (\Xcal, E)$.}
    \label{fig:grinch-graph}
\end{subfigure}\\
\begin{subfigure}[h]{0.49\textwidth}
  \centerline{\includegraphics[width=\textwidth]{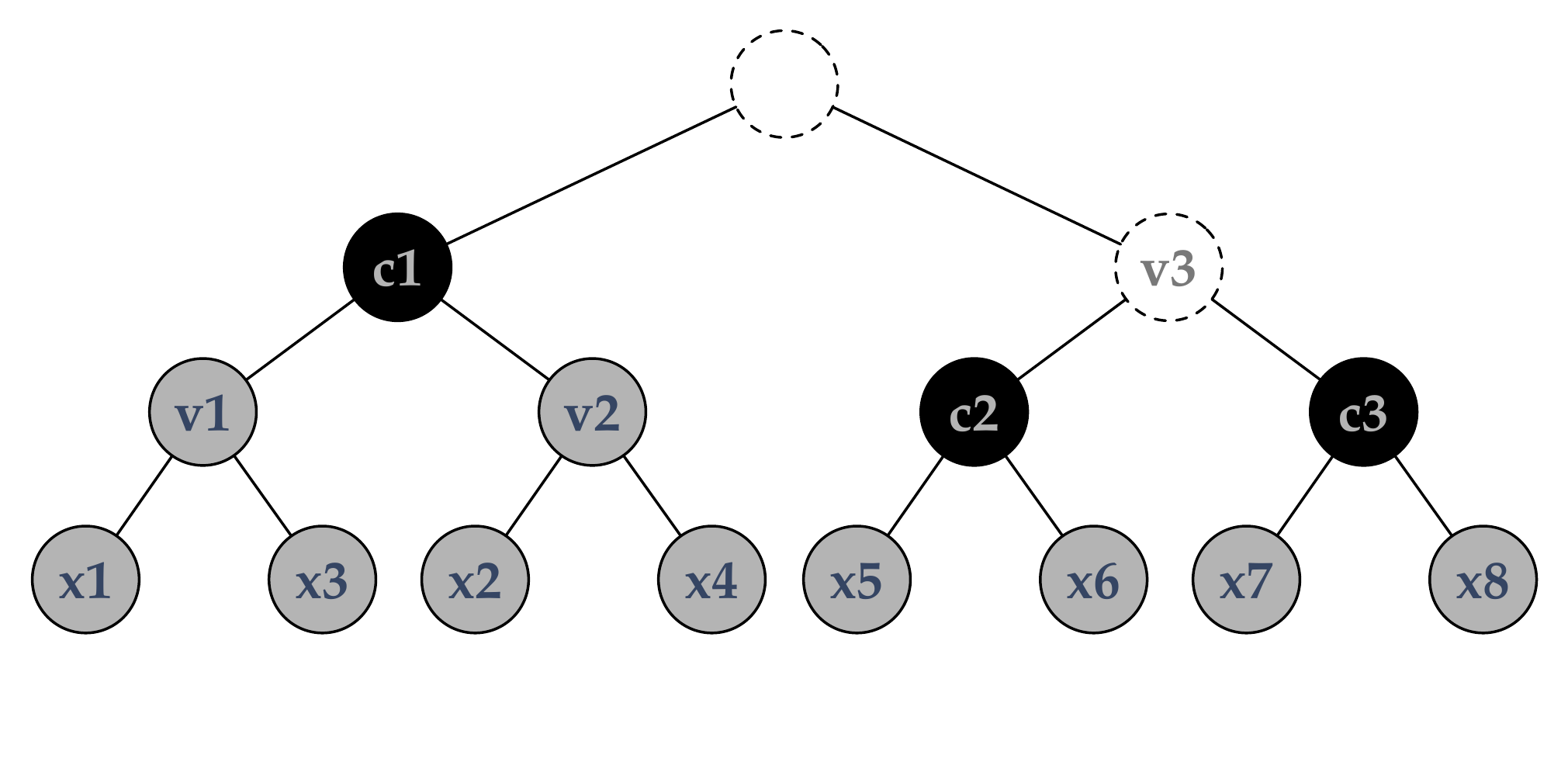}}
  \caption{Strongly connected \& complete.}
    \label{fig:strong-complete}
\end{subfigure}
\begin{subfigure}[h]{0.49\textwidth}
  \centerline{\includegraphics[width=\textwidth]{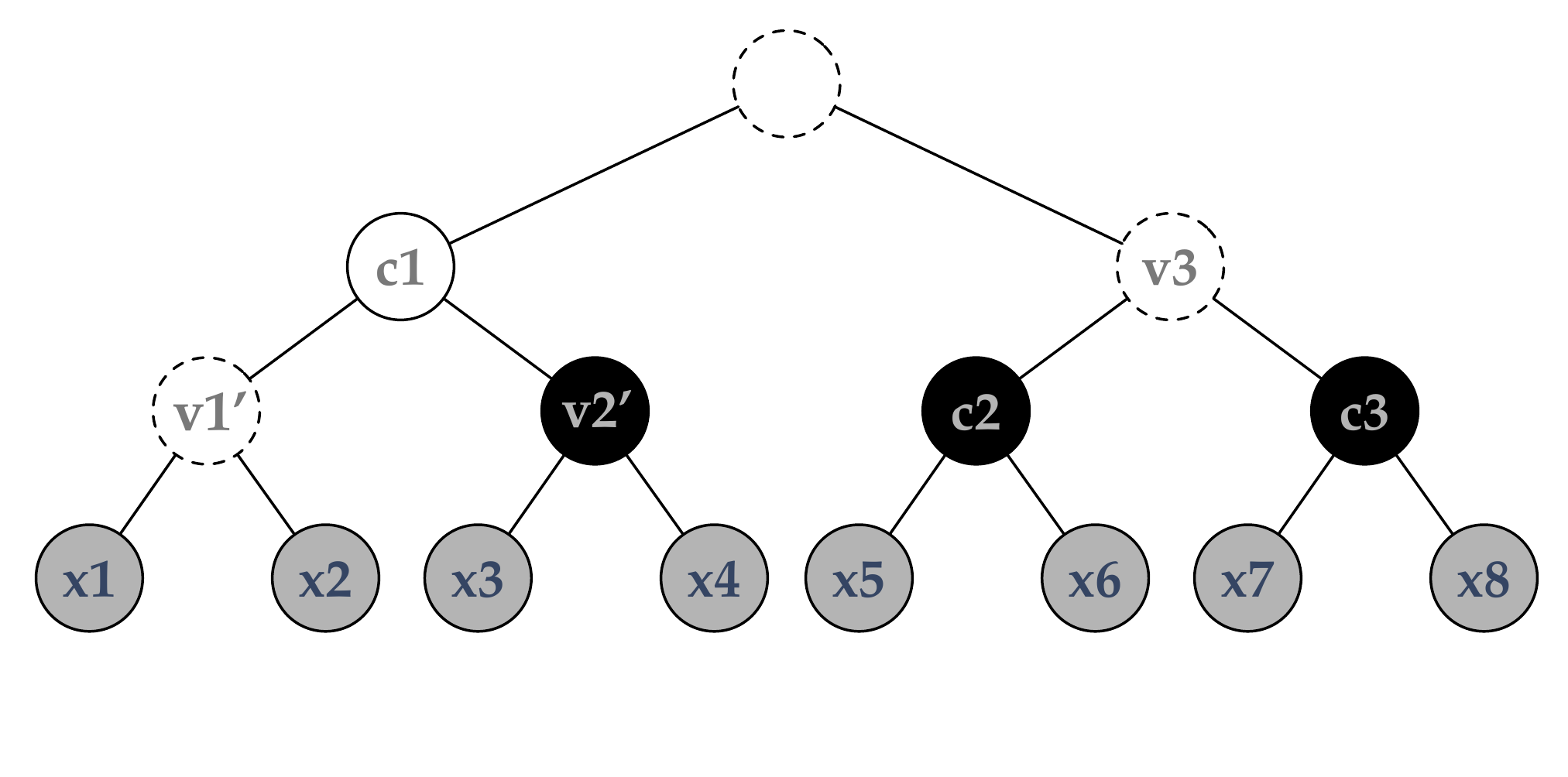}}
  \caption{Complete only.}
    \label{fig:complete-only}
\end{subfigure}
\caption{A graph $G$ with 3 connected components (Figure
  \ref{fig:grinch-graph}).  In Figure \ref{fig:strong-complete} and
  Figure \ref{fig:complete-only}, black-filled nodes are maximal,
  gray-filled nodes are strongly connected, nodes with no fill and
  solid borders are connected (but not strongly), and nodes with dashed
  borders are disconnected. The tree in Figure
  \ref{fig:strong-complete} satisfies strong connectivity and
  completeness. The tree in Figure \ref{fig:complete-only} does not
  satisfy strong connectivity because $v_1$ is disconnected.}
\label{fig:graph-strong-complete}
\end{figure*}

\paragraph{Case 1:}
Let $L \subseteq \lvs{v}$ be the set of $v$'s descendant leaves to
which $x$ is connected. Then $x$ is initially added as a sibling of
its nearest neighbor leaf, $x'$, and $x' \in L$ because $f$ separates
$G$. $\parent{x}$ is strongly connected because there exists an edge
between $x$ and $x'$.

The addition of $x$ does not disconnect $v$ or any strongly connected
descendant of $v$. To see why, consider the siblings of the ancestors
of $x'$ before the addition of $x$. Any such sibling that was
connected to $x'$, is, after the addition of $x$, also connected to
$\parent{x}$ and thus remains strongly connected. Nodes that are not
ancestors of $x$ cannot be disconnected and thus, before rotations,
strong connectivity is preserved.

Now consider subsequent rotations. By the logic above, $x$ and its
sibling, $x' = \sib{x}$, are connected. If a rotation succeeds then $x$ and
$\aunt{x}$ are swapped. So long as $\aunt{x}$ and $\sib{x}$ form a
connected subgraph in $G$, i.e., $\phi(\sib{x}, \aunt{x}) = \phi(x,
\sib{x}) = 1$, then the rotation preserves strong connectivity.

The only way for a rotation to disrupt strong connectivity is if $x$
and $\aunt{x}$ are swapped, and $\sib{x}$ and $\aunt{x}$ do not form a
connected subgraph in $G$, i.e., $\phi(x, \sib{x}) > \phi(\sib{x},
\aunt{x})$. But, because $f$ separates $G$, $\phi(x, \sib{x}) >
\phi(\sib{x}, \aunt{x}) \Longrightarrow f(x, \sib{x}) > f(\sib{x},
\aunt{x})$ and so, in this case, a rotation will not be performed and
the procedure terminates.

\paragraph{Case 2:}
If there does not exist an edge between $x$ and any leaf in $\lvs{v}$,
then after $x$ is made a sibling of some leaf $x'' \in \lvs{v}$, $v$
is no longer strongly connected and so strong connectivity has not
been preserved. Since $v$ was strongly connected before the addition
of $x$, there exists an edge between $\lvs{\sib{x}}$ and
$\lvs{\aunt{x}}$.  Since $f$ separates $G$, $f(x, \sib{x}) <
f(\sib{x}, \aunt{x})$, which triggers the \rotate
subroutine. Rotations proceed with respect to $x$ at least until $x$
is no longer a descendant of $v$, and thus, $v$ remains strongly
connected. Strongly connected nodes that are not descendants of $v$
are unaffected by the rotations and so strong connectivity is
preserved.
\end{proof}

\begin{lemma}[Grafting Lemma 1]
\label{thm:grafting1}
Let $\Tcal$ satisfy strong connectivity and completeness. Let $v$ be a
node in $\Tcal$ such that $v$ is either a maximal strongly connected
node or not strongly connected. Then a \graft operation initiated from
$v$ preserves strong connectivity and completeness.
\end{lemma}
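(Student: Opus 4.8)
The plan is to translate the hypotheses into a statement about leaf sets and then show that \graft can only ever merge subtrees whose leaves are unions of whole connected components. First I would use the tree's \emph{strong connectivity} property together with \emph{completeness} to establish a dichotomy. Since every connected node is strongly connected, a node that is not strongly connected must be disconnected. Moreover, because the connected components of $G$ form a tree-consistent partition and each component node is connected (hence strongly connected, so all of its descendants are connected), no disconnected node can lie below a component node. Consequently every node's leaf set is either (i) a subset of a single component, in which case the node is connected and strongly connected and equals a whole component exactly when it is a maximal strongly connected node, or (ii) a union of at least two whole components, in which case it is disconnected. In particular, the hypothesis that $v$ is maximal strongly connected or not strongly connected means $\lvs{v}$ is a union of one or more whole components; the same holds for every ancestor of $v$, which matters because the loop may climb from $v$.

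The main argument is a case split on the outcome of the while-loop of \graft. If the loop exits without performing a merge, $\Tcal$ is unchanged and both properties are preserved trivially. Otherwise a merge fires for some current pair $(v,l)$, and the crux is the claim that at the moment of merging \emph{both} $\lvs{v}$ and $\lvs{l}$ are unions of whole components. To see this I would first observe that throughout the loop $v$ and $l$ have disjoint leaf sets drawn from distinct components (they stay disjoint until the loop terminates via the $\lca{v}{l}$ or sibling conditions), so $\lvs{v}\cup\lvs{l}$ spans at least two components and hence $\phi(v,l)=0$. Now apply the contrapositive of separation: if $\parent{l}$ were connected, i.e. $\phi(l,\sib{l})=1>0=\phi(v,l)$, then $f(l,\sib{l})>f(v,l)$, so the merge condition $f(v,l)>\max[f(v,\sib{v}),f(l,\sib{l})]$ fails and the loop instead sets $l=\parent{l}$; the argument for $v$ is symmetric. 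Therefore a merge can occur only when $\phi(v,\sib{v})=\phi(l,\sib{l})=0$, i.e. when both $\parent{v}$ and $\parent{l}$ are disconnected, which by the dichotomy forces both $\lvs{v}$ and $\lvs{l}$ to be unions of whole components.

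Given this, preservation follows. The merge creates a new node $u$ with $\lvs{u}=\lvs{v}\cup\lvs{l}$; since $\lvs{u}$ contains at least two distinct whole components it is disconnected, so it carries no strong-connectivity obligation, while its children $v$ and $l$ remain unchanged strongly connected subtrees. Detaching $l$ only deletes whole components from its former ancestors, leaving each either still disconnected or newly connected with unchanged (hence strongly connected) children, so strong connectivity is preserved globally; and because no whole component is ever split, the components remain intact subtrees and thus still form a tree-consistent partition, preserving completeness. It then remains to check the \rst call that follows the merge: I would argue that it only ever swaps entire, already strongly connected subtrees among disconnected ancestors, so it can neither split a component nor create a connected node with a disconnected descendant.

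I expect the main obstacle to be the careful bookkeeping of the two pointers as they climb, namely maintaining the invariants $\phi(v,l)=0$ and ``$v$ and $l$ are unions of whole components unless forced to climb,'' and correctly handling the loop's three termination conditions so that the ``merge implies both sides are unions of components'' claim is airtight. The second delicate piece is the analysis of \rst, where the governing comparisons are between cross-component (disconnected) sets on which separation gives little direct leverage; there I must argue purely structurally that swapping whole, already strongly connected subtrees cannot disturb strong connectivity or completeness.
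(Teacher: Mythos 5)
Your proof is correct and takes essentially the same route as the paper's: both arguments hinge on applying model-based separation to show that the merge condition must fail against any node whose parent is connected (i.e., any strongly connected but non-maximal node), so a successful \graft can only relocate nodes whose leaves are unions of whole connected components, which can neither split a component nor disconnect a strongly connected subtree. If anything, your write-up is more thorough than the paper's terse two-paragraph proof --- the explicit dichotomy on leaf sets, the loop invariant $\phi(v,l)=0$ as the pointers climb, and the treatment of the subsequent \rst call are all glossed over there (your only slips are cosmetic: the relocated subtrees and the subtrees swapped by \rst are unions of whole components, hence possibly disconnected rather than ``strongly connected'' as you occasionally call them, but nothing in your argument relies on that label).
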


\begin{proof}
Let $\Tcal$ be strongly connected and complete. Since $\lvs{v}$ is not
a strict subset of any connected component in $G$, there does not
exist a non-empty subset $s$ in $\lvs{\Tcal} \backslash \lvs{v}$ such
that $s \cup \lvs{v}$ is a connected subgraph in $G$. For any node
$v'$ that is strongly connected but not maximal, there must be an edge
connecting $\lvs{v'}$ and $\lvs{\sib{v'}}$ and $\sib{v'}$ must be
strongly connected, so $f(v', \sib{v'}) > f(v, v')$.  Therefore, an
attempt to make any such $v'$ the sibling of $v$ fails.

If $v''$ is a maximal strongly connected node, an attempt to make
$v''$ the sibling of $v$ may succeed but this does not disconnect any
strongly connected subtrees in $\Tcal$.  The same is true if $v''$ is
not strongly connected.
\end{proof}

\begin{figure*}[t!]
\captionsetup[subfigure]{justification=centering}
\begin{subfigure}[h]{0.49\textwidth}
  \centerline{\includegraphics[width=\textwidth]{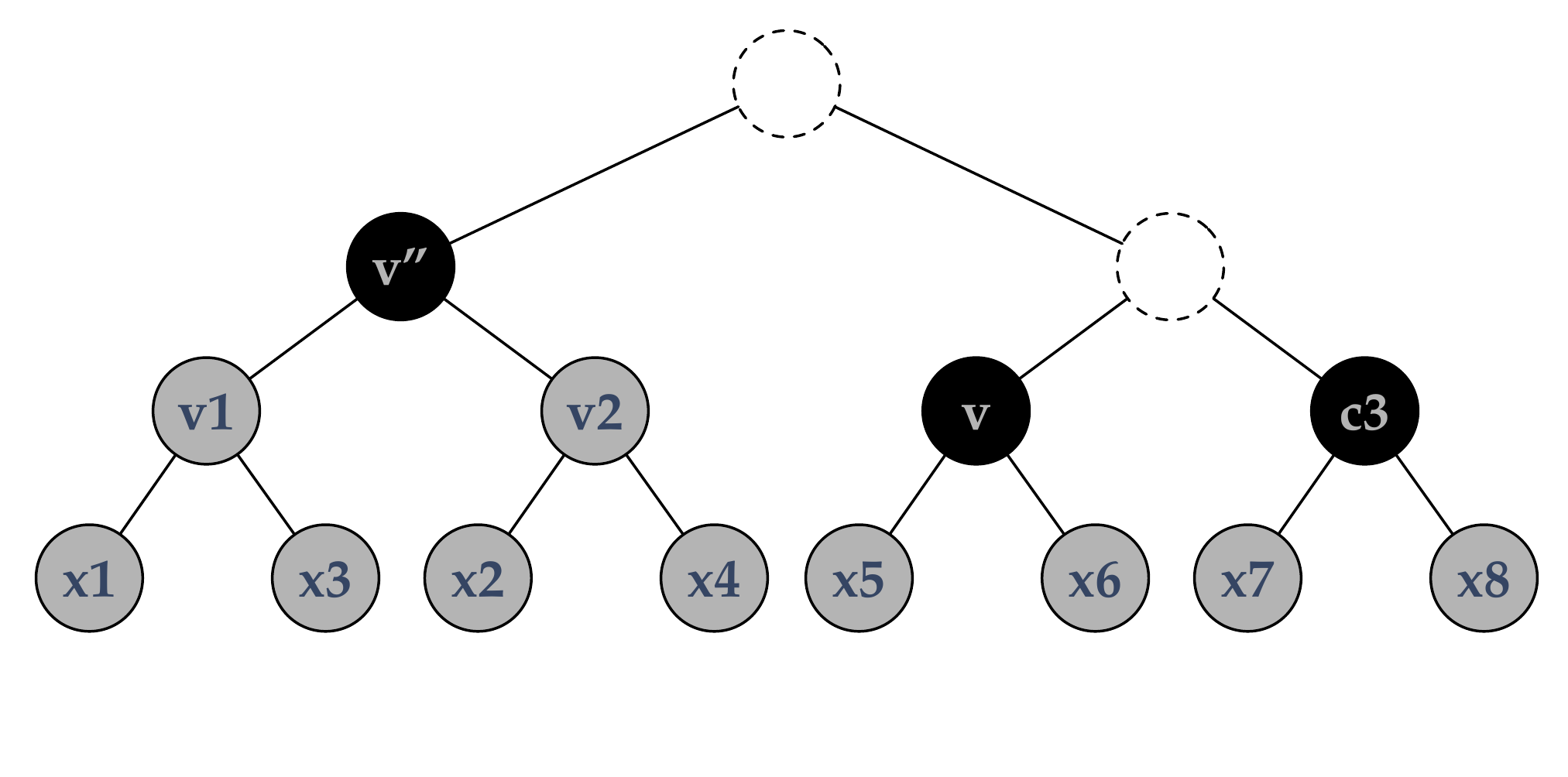}}
  \caption{Grafting Lemma 1 Visual Aid.}
    \label{fig:graft-lem1}
\end{subfigure}
\begin{subfigure}[h]{0.49\textwidth}
  \centerline{\includegraphics[width=\textwidth]{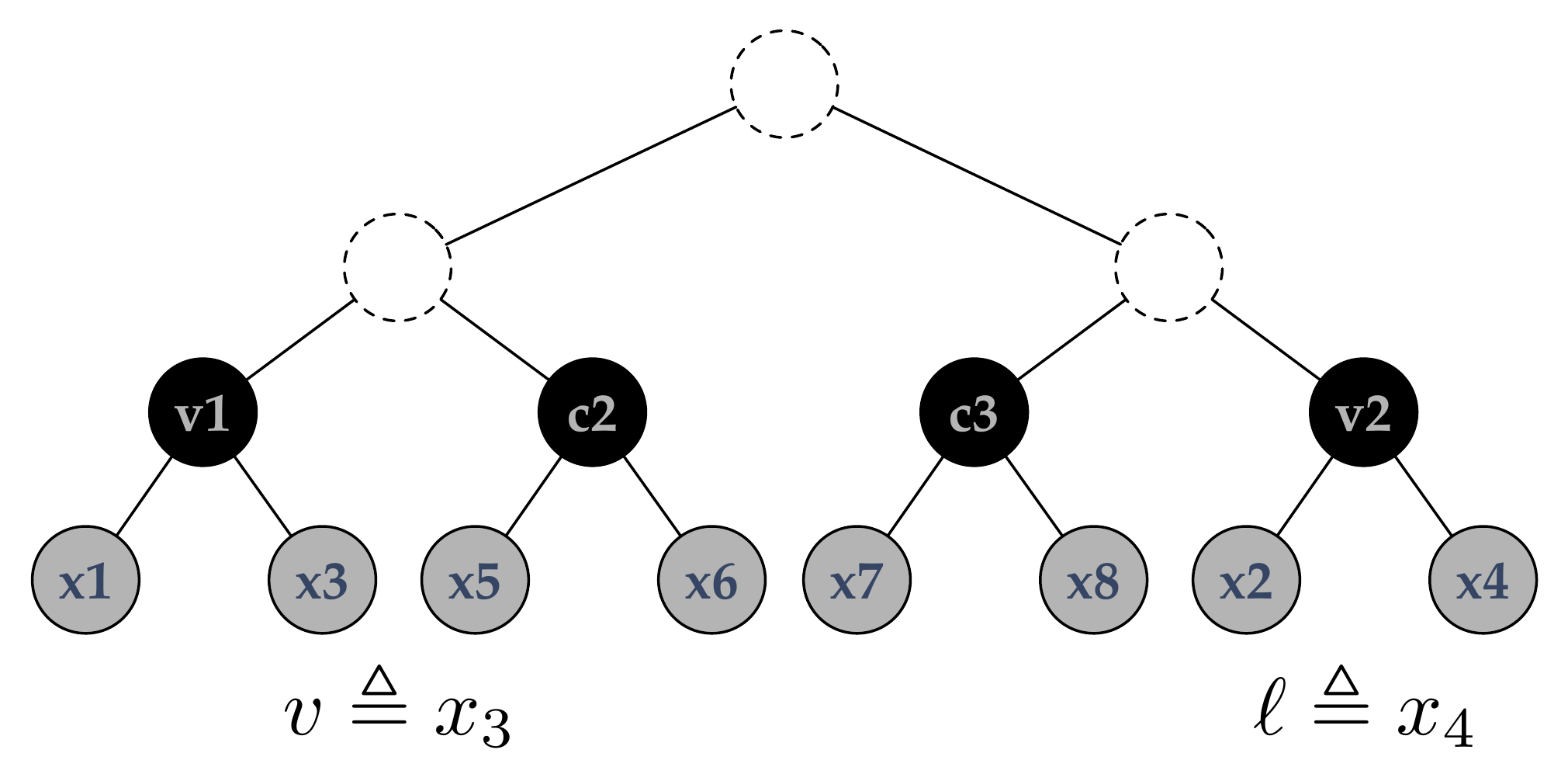}}
  \caption{Grafting Lemma 2 Visual Aid.}
  \label{fig:graft-lem2}
\end{subfigure}
\caption{We reuse the graph in Figure \ref{fig:grinch-graph}. The tree
  in Figure \ref{fig:graft-lem1} is strongly connected and
  complete. Consider the node $v$. A graft initiated from $v$ may make
  $v''$ a sibling of $v$ because $\lvs{v}$ is not a (strict) subset of
  a connected component and $v''$ is maximal. After such a graft,
  notice that the tree would still satisfy strong connectivity and
  completeness.  The tree in Figure \ref{fig:graft-lem2} is strongly
  connected but not complete. Consider $x_3$ which plays the role of
  $v$ in the proof of Grafting Lemma 2. When a constrained nearest
  neighbor search is executed from its parent, $v_1$, the leaf
  $x_4$--which plays the role of $\ell$--is returned. If $v_1$ and
  $v_2$ are made siblings, their parent is strongly
  connected.}
\label{fig:graft-lem-12}
\end{figure*}

\begin{lemma}[Grafting Lemma 2]
\label{thm:grafting1}
Let $\Tcal$ be a tree such that $\lvs{\Tcal} = \Xcal$ and let $\Tcal$
satisfy strong connectivity. Let $v$ be strongly connected and let
$\lvs{v}$ be a strict subset of the vertices in some connected
component, $C$, in $G$.  Then, a \graft initiated from $v$ returns a
node $v'$ such that $v'$ is strongly connected and $\lvs{v} \subset
\lvs{v'}$.
\end{lemma}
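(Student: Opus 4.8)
The plan is to track the two pointers $v$ and $l$ as the \graft loop walks them up the tree, maintaining throughout that both remain strongly connected and that their leaf sets stay adjacent in $G$. First I would analyze the initial constrained nearest-neighbor search. Because $\lvs{v}$ is a \emph{strict} subset of the connected component $C$ and $C$ is connected in $G$, the frontier of $\lvs{v}$ inside $C$ is nonempty: some leaf $\ell_0 \in C \setminus \lvs{v}$ is joined by an edge to $\lvs{v}$, so $\phi(v, \{\ell_0\}) = 1$. Since the constrained search \conNN excludes $\lvs{v}$ and returns the leaf maximizing $f(v, \cdot)$, and since model-based separation forces every leaf adjacent to $\lvs{v}$ (value $\phi = 1$) to strictly outscore every non-adjacent one (value $\phi = 0$), the returned leaf $l$ must itself be adjacent to $\lvs{v}$. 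Hence $\phi(v, l) = 1$ and $l \notin \lvs{v}$.

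Next I would set up three invariants holding at the top of every iteration of the while loop: (I1) $v$ is strongly connected, (I2) $l$ is strongly connected, and (I3) $\phi(v, l) = 1$. The base case is immediate: $v$ is strongly connected by hypothesis, $l$ is a leaf and hence trivially strongly connected, and (I3) was just established. For preservation I would invoke model-based separation on each climbing step. When the algorithm sets $l = \parent{l}$ it is because $f(v, l) < f(l, \sib{l})$; reading the contrapositive of separation with common first argument $\lvs{l}$ and using $\phi(v, l) = 1$ forces $\phi(l, \sib{l}) = 1$, so $\lvs{\parent{l}} = \lvs{l} \cup \lvs{\sib{l}}$ is connected and, because $\Tcal$ satisfies strong connectivity, $\parent{l}$ is in fact strongly connected; moreover $\lvs{\parent{l}} \supseteq \lvs{l}$ keeps (I3) intact. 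The step $v = \parent{v}$ is handled by the symmetric argument, again using strong connectivity of $\Tcal$ to upgrade the merely-connected $\parent{v}$ to strongly connected.

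Finally I would argue termination and read off the return value. Each non-merging iteration advances $v$ or $l$ strictly upward toward the fixed ancestor $v' = \lca{v}{l}$, so the loop halts. I would then case on the exit. If the merge branch fires, the new node $\merge{v}{l}$ has leaf set $\lvs{v} \cup \lvs{l}$, which is connected by (I3) with two strongly connected children, hence strongly connected, and strictly contains the original $\lvs{v}$ because $l \notin \lvs{v}$. If instead the loop exits through a structural guard ($v = v'$, $l = v'$, or $\sib{v} = l$), the returned node is either the current $v$ (strongly connected by (I1) and strictly larger than the input since it has climbed) or the precomputed ancestor $v'$, which is strongly connected by the invariants---via (I2) when $l$ has reached $v'$, and via (I3) together with strongly connected children when $v$ and $l$ have become siblings so that $\lvs{v'}=\lvs{v}\cup\lvs{l}$---and which strictly contains $\lvs{v}$ since it contains the external leaf $l$.

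The hard part will be the bookkeeping at termination: tracking the mutable pointers $v$, $l$, \texttt{st}, and $v'$ through every exit branch to certify simultaneously that the returned node is strongly connected \emph{and} a strict superset of the input's leaf set. A secondary obstacle is confirming that the post-merge \rst call preserves the strong connectivity of the freshly formed node; since restructuring is initiated from the \emph{displaced} sibling subtree and walks its ancestors rather than altering the merged node's contents, I expect it to leave the new node's leaf set---and hence its strong connectivity---unchanged, but this must be checked against the restructuring analysis.
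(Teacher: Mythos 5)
Your proof follows essentially the same route as the paper's: use separation to show the constrained nearest-neighbor leaf must be adjacent to $\lvs{v}$, then maintain that both pointers remain strongly connected and mutually adjacent as they climb (each failed comparison forces, by the contrapositive of separation, an edge to the current sibling, hence a connected and therefore strongly connected parent), concluding at either a successful merge or at the lca. Your write-up is in fact somewhat more careful than the paper's---explicitly casing on the loop's exit guards and return-value bookkeeping, and flagging that the post-merge \rst call cannot alter the merged node's subtree---but the underlying argument is identical.
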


\begin{proof}
Since $\lvs{v}$ are a strict subset of the vertices in the connected
component, $C$, there exists a non-empty subset $s$ in $\lvs{\Tcal}
\backslash \lvs{v}$ such that $s \cup \lvs{v}$ constitute the vertices
in $C$. Let $\ell$ maximize $f(v,\ell)$ over all $\lvs{\Tcal}
\setminus \lvs{v}$. By the fact that $\lvs{v}$ is a strict subset of a
connected component, there must exist an edge between $\lvs{v}$ and
$\ell$. Note that $\ell$ is the leaf found when the constrained
nearest neighbor search from $v$ is initiated in the first line of
\graft (Algorithm \ref{alg:graft}).

If $f(v, \ell) < f(\ell, \sib{\ell})$, then there must exist an edge
between $\ell$ and a node in $\lvs{\sib{\ell}}$ and
so $\parent{\ell}$ is strongly connected.
If $f(v, \ell) < f(v, \sib{v})$, then there must
exist an edge between a node in $\lvs{v}$ and a node in
 $\lvs{\sib{v}}$ and so $\parent{v}$ is strongly connected.
  In both of these cases, we do not
merge $v$ with $\ell$, but instead attempt another merge between two
strongly connected nodes, either: $\parent{v}$ with $\ell$, $v$ with
$\parent{\ell}$, or $\parent{v}$ with $\parent{\ell}$. As before, the
two nodes we are attempting to merge also have an edge between them.

Let $v_1$ and $v_2$ be two nodes involved in a merge and let $v_1 \in
\ancs{v}$ and $v_2 \in \ancs{\ell}$.  If at some point
\begin{align*}
  f(v_1, v_2) >\max[f(v_1, \sib{v_1}), f(v_2, \sib{v_2})]
\end{align*}
then $v_2$ is made a
sibling of $v_1$ and the new parent of $v_1$ is returned. Since $v_1$
and $v_2$ are strongly connected and there exists an edge between
\lvs{v_1} and \lvs{v_2}, $\parent{v_1}$, which is created by the merge, is
strongly connected, and the lemma holds.

If a merge is never performed, the recursion stops when $v_1 = v_2 =
\lca{v}{\ell}$. In this case, the \texttt{lca}, which we return, is
already strongly connected and, by definition, its leaves are a
superset of $\lvs{v}$.
\end{proof}

\begin{lemma}[Restructuring Lemma]
Let $v \in \Tcal$ be strongly connected. Let $a \in \ancs{v}$ be the
deepest connected ancestor of $v$ such that: $a$ is not strongly
connected, and all siblings of the nodes on the path from $v$ to $a$
are strongly connected. Then \rst on inputs $v$ and $a$ restructures
$\Tcal[a]$ so that $a$ satisfies strong connectivity.
\end{lemma}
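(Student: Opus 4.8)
The plan is to analyze the subtree $\Tcal[a]$ and run an induction up the path from $v$ to $a$ as \rst walks its pointer $z$ from $v$ toward $r = a$. First I would fix notation for the \emph{spine}: write the path as $v = u_0, u_1, \dots, u_k = a$ with $u_{i+1} = \parent{u_i}$, so that the two children of each $u_{i+1}$ are $u_i$ and the ``piece'' $\sib{u_i}$. By hypothesis $v$ is strongly connected and every piece $\sib{u_0}, \dots, \sib{u_{k-1}}$ is strongly connected, hence every descendant of $a$ except possibly the spine nodes $u_1, \dots, u_{k-1}$ is already connected; the whole restructuring problem is confined to the spine. I would also record the two facts that drive everything: $\lvs{a}$ is a connected subgraph of $G$ (since $a$ is connected), and a \texttt{swap} only relocates an intact subtree, so it never changes any piece's strong-connectivity status --- in particular every piece stays strongly connected no matter where it is moved.

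The core is an invariant maintained as $z$ ascends the spine. When \rst reaches $z = u_i$, I claim (i) the current subtree rooted at $z$ is strongly connected, and (ii) the candidate set $as = \{\sib{w} : w \in \ancs{z} \setminus \ancs{r}\}$ is exactly the collection of pieces not yet absorbed into $z$'s subtree, each strongly connected. The base case $z = u_0 = v$ is immediate. For the inductive step: since $\lvs{z} \subsetneq \lvs{a}$ (pieces remain) and $\lvs{a}$ is connected, any bipartition of $\lvs{a}$ into $\lvs{z}$ and the rest has a crossing edge, so there is an edge between $\lvs{z}$ and some unabsorbed piece $p$; then $\lvs{z} \cup \lvs{p}$ is connected while $\lvs{z} \cup \lvs{p'}$ is disconnected for any piece $p'$ not sharing an edge with $z$. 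Thus $\phi(\lvs{z}, \lvs{p}) = 1 > 0 = \phi(\lvs{z}, \lvs{p'})$, and model-based separation forces $f(z, p) > f(z, p')$, so the maximizer $m = \argmax_{w \in as} f(z, w)$ is necessarily a piece sharing an edge with $z$. Whether or not a \texttt{swap} is triggered, after this step the sibling of $z$ is a piece connected to $z$ (if $\sib{z}$ was already such a piece the test fails or swaps one connected piece for another; if it was not, the strict inequality triggers the swap), so $\parent{z}$ has leaves $\lvs{z} \cup \lvs{m}$, which is connected; with both children strongly connected, $\parent{z}$ is strongly connected. Because the relocated old sibling is itself a strongly connected piece that merely moves up the spine, the candidate set at $z \leftarrow \parent{z}$ is again precisely the remaining unabsorbed pieces, re-establishing the invariant.

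The loop terminates when $z$ reaches $r = a$, i.e., after processing $u_{k-1}$. At that last step exactly one piece remains, and because $\lvs{a}$ is connected and splits into the two nonempty connected sets $\lvs{u_{k-1}}$ and that last piece, there must be an edge between them; hence $a = \parent{u_{k-1}}$ becomes connected and, by the invariant, strongly connected. Combining this with the first-paragraph observation --- that all non-spine descendants of $a$ stayed strongly connected throughout, and every rebuilt spine node is strongly connected by the induction --- gives that every descendant of $a$ is connected, i.e., $\Tcal[a]$ satisfies strong connectivity, as claimed. A minor remark is that the proof in fact uses only that $v$ and all path-siblings are strongly connected and that $\lvs{a}$ is connected, so the ``deepest'' clause is needed only to locate the call site, not inside this argument.

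The main obstacle I anticipate is the bookkeeping around \texttt{swap}: I must show carefully that the collection of pieces hanging off the spine above the current $z$ always coincides with $as$, and that a swap of $\sib{z}$ with $m$ merely permutes these pieces --- preserving both their number and their strong connectivity --- rather than destroying the spine/piece decomposition. The conceptual content, namely edge existence from connectivity of $\lvs{a}$ plus a single application of model-based separation to certify $m$ as a connected piece, is short; the care lies in verifying that this structural invariant survives every iteration.
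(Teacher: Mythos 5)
Your proof is correct and follows essentially the same route as the paper's: walk the pointer up the spine from $v$ toward $a$, use connectivity of $\lvs{a}$ to produce a crossing edge from the current $z$ to some unabsorbed piece, invoke model-based separation to force the argmax $m$ to be a piece sharing an edge with $z$, and conclude that each rebuilt parent is strongly connected until $a$ itself is reached. If anything, your rendering is more careful than the paper's — your crossing-edge argument correctly lands on \emph{some} unabsorbed piece, whereas the paper asserts (imprecisely) that the edge runs from $\lvs{z}$ to the off-spine child $a'$ of $a$, which need not hold — but the underlying mechanism and invariant are the same.
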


\begin{figure*}[h]
\captionsetup[subfigure]{justification=centering}
\centering
\begin{subfigure}[h]{0.49\textwidth}
  \centerline{\includegraphics[width=\textwidth]{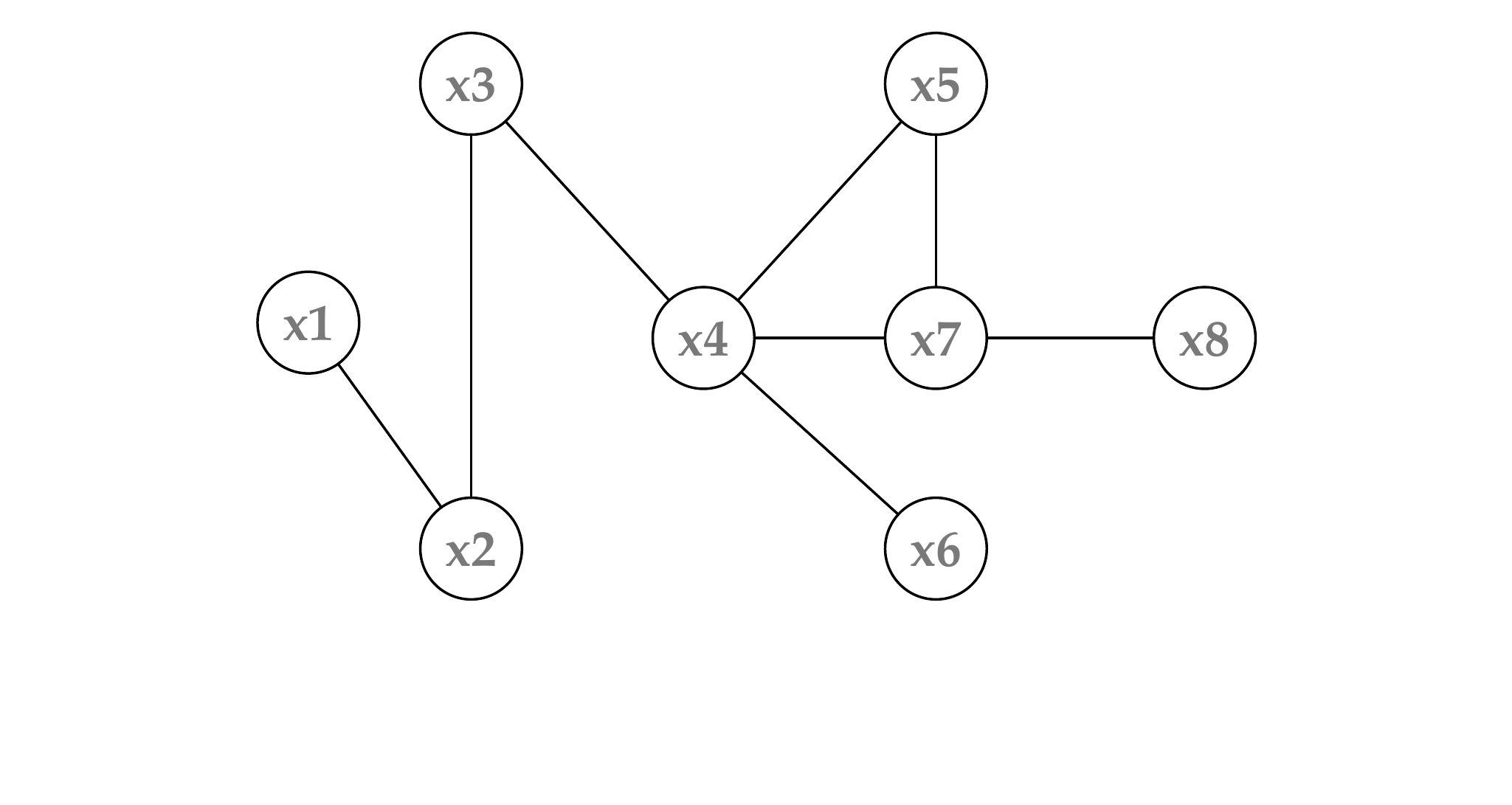}}
  \caption{A graph $G = (\Xcal, E)$.}
  \label{fig:restruct-lem-graph}
\end{subfigure}
\begin{subfigure}[h]{0.49\textwidth}
  \centerline{\includegraphics[width=\textwidth]{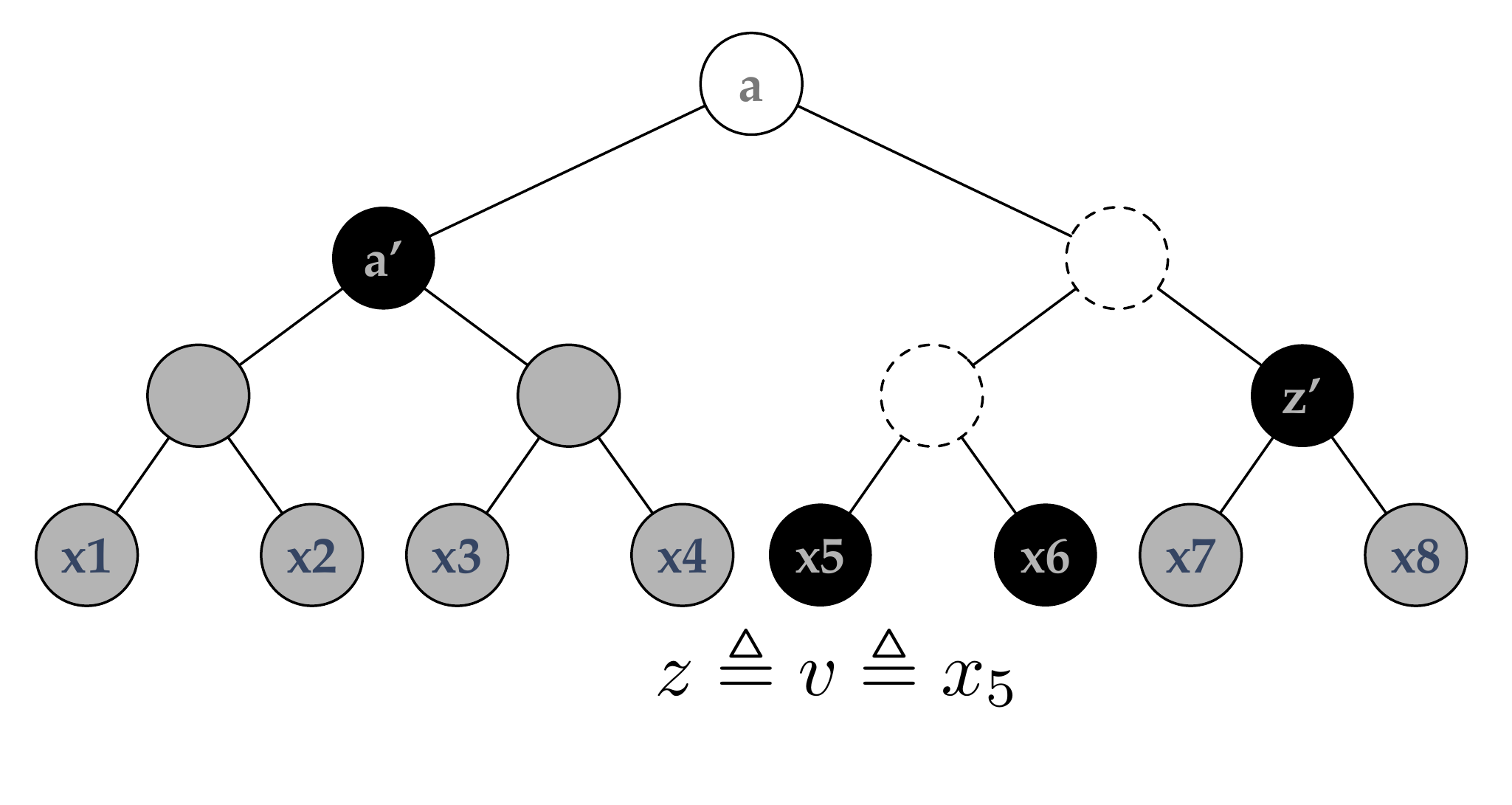}}
  \caption{$a$ is connected, but not strongly.}
  \label{fig:restruct-lem-pt1}
\end{subfigure}\\
\begin{subfigure}[h]{0.49\textwidth}
  \centerline{\includegraphics[width=\textwidth]{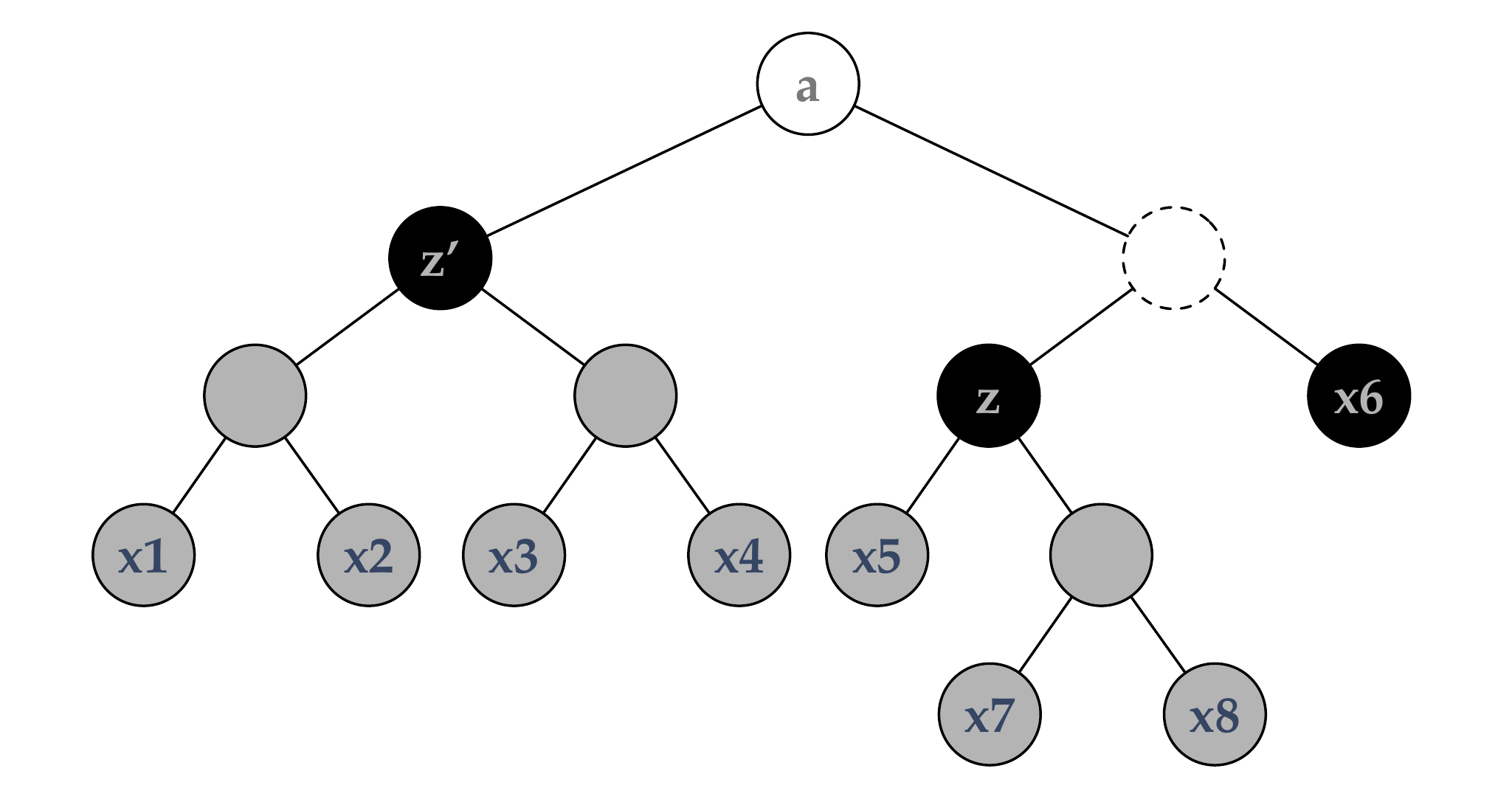}}
  \caption{1 swap applied.}
  \label{fig:restruct-lem-pt2}
\end{subfigure}
\begin{subfigure}[h]{0.49\textwidth}
  \centerline{\includegraphics[width=\textwidth]{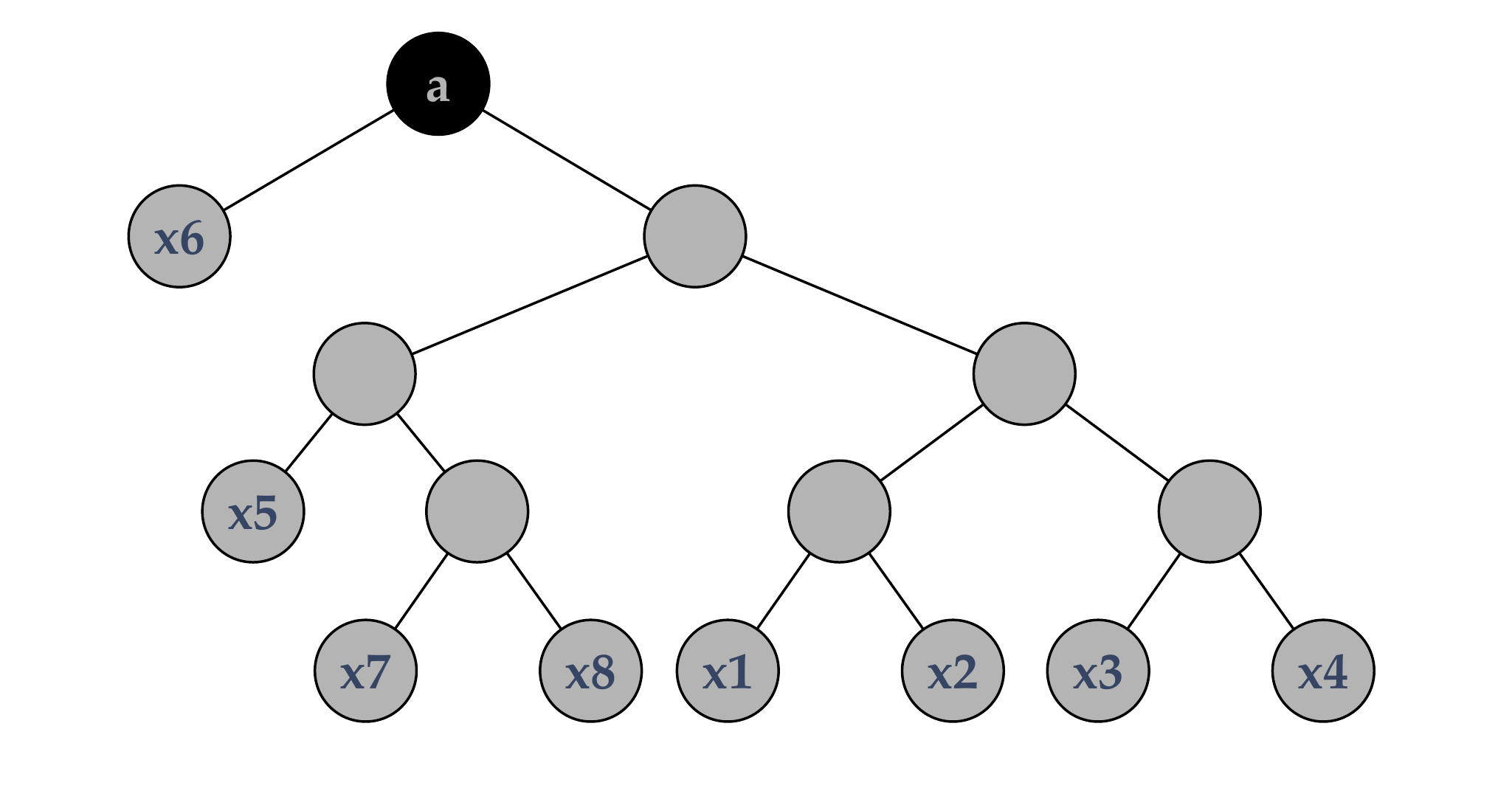}}
  \caption{$a$ is strongly connected.}
  \label{fig:restruct-lem-pt3}
\end{subfigure}
\caption{The \rst method. As before, black-filled nodes are maximal,
  gray-filled nodes are strongly connected, nodes with no fill and
  solid borders are connected (but not strongly) and nodes with dashed
  borders are disconnected. Also, note that the labels $z, z',\ell$
  and $a'$ do not apply to the same nodes throughout all figures so to
  match their usage in proof. In Figure \ref{fig:restruct-lem-pt1},
  $z \triangleq v \triangleq x_5$.  $\sib{z}$ and $z'$ are swapped to
  produce the tree in Figure \ref{fig:restruct-lem-pt2}. Finally,
  $\sib{z}$ and $z'$ from Figure \ref{fig:restruct-lem-pt2} are
  swapped to produce the tree in Figure \ref{fig:restruct-lem-pt3},
  which is strongly connected.}
  \label{fig:restruct-lem}
\end{figure*}

\begin{proof}
Let $z$ be the deepest ancestor of $v$ that is strongly connected with
parent $\parent{z}$ that is disconnected. Since $\parent{z}$ is
disconnected (but by assumption both $z$ and $\sib{z}$ are connected),
there are no edges between \lvs{z} and \lvs{\sib{z}}.

Let $a'$ be a child of $a$ and without loss of generality, $a' \not\in
\ancs{z}$.  Since $a$ is the deepest connected ancestor of $z$, there
must exist an edge between \lvs{z} and \lvs{a'}.

When computing the argmax of $f(z, \cdot)$ in the \rst method, a node,
$z'$, that is connected to $z$ will be returned and then swapped with
$\sib{z}$. The new parent of $z$ is strongly connected because $z$ and
$z'$ are both strongly connected and there exists an edge between
\lvs{z} and \lvs{z'}. Any subsequent swap attempted from a
disconnected node with a connected ancestor succeeds and produces a
new parent that is strongly connected.

Since $a$ is connected and a swap among the descendants of
$a$ do not change $\lvs{a}$, swapping preserves the connectedness of
$a$. Therefore, swaps proceed until the node $a$ is reached at which
point $a$ must be strongly connected.

Note that a swap attempt between a strongly connected node and a node
to which it is not connected fails, because $f$ separates $G$.  A swap
attempt between a connected node and a node to which it is connected
succeeds and produces a new parent that is strongly connected.
\end{proof}

We now prove Theorem \ref{thm:hsep}.

\begin{proof}
We show by induction that if \alg is used to build a tree, $\Tcal$,
over vertices, $\Xcal$, then the connected components of $G$ are a
tree consistent partition in $\Tcal$. Furthermore, $\Tcal$ satisfies
strong connectivity.

Clearly, the theorem holds for the base case: a tree with a single
node.

Let $\Xcal = \lvs{\Tcal}$. Assume the inductive hypothesis: that
$\Tcal$ satisfies completeness and strong connectivity. Now vertex $x$
arrives.

If there does not exist an edge between $x$ and any other vertex in
$\Xcal$, then after rotations, $\Tcal'$ satisfies completeness. Since
$\forall a \in \ancs{x}$, \lvs{a} is a not a strict subset of any
connected component in $G$, by Grafting Lemma 1, subsequent \graft
attempts from the ancestors of $x$ preserve strong connectivity and
completeness and so the theorem holds.

Assume that $x$ is connected to some set of leaves $s \subseteq
\lvs{\Tcal}$.  Since $\Tcal$ satisfies strong connectivity, by the
Rotation Lemma, after $x$ is added and rotations terminate, $\Tcal'$
satisfies strong connectivity. Note that $\Tcal'$ may not satisfy
completeness if, before the arrival of $x$, the leaves in $s$ formed
at least 2 distinct connected subgraphs in $G$.

After rotations, a series of \graft attempts are performed. Consider
the first \graft initiated at $\parent{x}$.
By Grafting Lemma 2, the
attempt returns a strongly connected ancestor of $x$ whose leaves are
a strict superset of $\lvs{x}$.  If a \texttt{merge} is performed that
moves a node $v$ and makes it a sibling of $v'$, then strong
connectivity may be violated. However, notice that the only nodes that
can be disconnected by such a merge are the node that, prior to the
merge, were ancestors of $v$ and also descendants of $a = \lca{v}{v'}$.

After the merge, $a$ is restructured, and by the Restructuring Lemma,
the resulting tree satisfies strong connectivity. Subsequent calls to
\graft proceed from $a$. Notice that each invocation of \graft returns
a new strongly connected node with a strictly larger number of
descendant leaves, until the resulting tree satisfies completeness.
Therefore, successive grafting followed by restructuring eventually
returns a node whose leaves are a connected component of
$G$. Ultimately, after rotations and grafting, $\Tcal'$ must satisfy
completeness and strong connectivity.
\end{proof}
\end{appendix}

\end{document}